\documentclass[journal]{IEEEtran}
\pagestyle{empty}
\textwidth 160mm
\textheight 240mm
\oddsidemargin 0mm
\evensidemargin 0mm
\topmargin -15mm
\newfont{\my}{cmr10 scaled \magstep 0}

\usepackage{cite}
\usepackage{multirow}
\ifCLASSINFOpdf
  \usepackage[pdftex]{graphicx}
\else
  \usepackage[dvips]{graphicx}
\fi

\usepackage{url}

\usepackage{amsmath}
\usepackage{amsthm}
\usepackage{amsfonts}
\usepackage{amssymb}
\usepackage{subcaption}

\begin{document}

\newtheorem{theorem}{Theorem}
\newtheorem{lemma}[theorem]{Lemma}
\newtheorem{definition}[theorem]{Definition}

\newcommand{\R }{\mathbb{R}}
\newcommand{\I }{\mathcal{I}}
\newcommand{\A }{\mathcal{A}}
\newcommand{\B }{\mathcal{B}}
\newcommand{\F }{\mathcal{F}}
\newcommand{\Ss }{\mathcal{S}}
\newcommand{\x }{\mathbf{x}}
\newcommand{\kk }{\mathbf{k}}
\newcommand{\nn }{\mathbf{n}}
\newcommand{\pp }{\mathbf{p}}
\newcommand{\uu }{\mathbf{u}}
\newcommand{\ud }{\,\mathrm{d}}

\title{Blur Invariants for Image Recognition}

\author{Jan Flusser,
Mat\v{e}j L\'ebl, Matteo Pedone,  Filip \v{S}roubek, and Jitka Kostkov\'a
\thanks{Jan~Flusser, Mat\v{e}j L\'ebl, Filip \v{S}roubek, and Jitka Kostkov\'{a} are with the Czech Academy of Sciences, Institute of Information Theory and Automation, Pod vod\'{a}renskou v\v{e}\v{z}\'{\i} 4, 182\,08 Praha 8, Czech Republic,

e-mails: \{flusser, lebl, sroubekf, kostkova\}@utia.cas.cz}
\thanks{Matteo Pedone is with the Center for Machine Vision Research, Department of Computer Science and Engineering, University of Oulu, Oulu FI-90014, Finland 

e-mail: matteo.pedone@oulu.fi}

}


\maketitle

\begin{abstract}
    Blur is an image degradation that is difficult to remove. Invariants with respect to blur offer an alternative way of a~description and recognition of blurred images without any deblurring. In this paper, we present an original unified theory of blur invariants. Unlike all previous attempts, the new theory does not require any prior knowledge of the blur type. The invariants are constructed in the Fourier domain by means of orthogonal projection operators and moment expansion is used for efficient and stable computation.
    It is shown that all blur invariants published earlier are just particular cases of this approach. 
    Experimental comparison to concurrent approaches shows the advantages of the proposed theory.
\end{abstract}

\begin{IEEEkeywords}
Blurred image, object recognition, blur invariants, projection operators, moments.
\end{IEEEkeywords}
\IEEEpeerreviewmaketitle

\section{Introduction}

In image processing and analysis, we often have to deal with images that are degraded versions of the original scene. One of the most common degradations is \emph{blur}, which usually appears as smoothing or suppression of high-frequency details of the image. Capturing an ideal scene $f$ by an imaging device with the point-spread function (PSF) $h$, the observed image $g$ can be modeled as a~convolution of both
\begin{equation} \label{model1}
   g(\x) = (f*h) (\x)\,.
\end{equation}
This linear image formation model, even if it is very simple, is a~reasonably accurate approximation of many imaging devices and acquisition scenarios.

The blur may come from various physical sources. 
Based on our prior knowledge about the PSF, we distinguish a~\emph{blind} case when no information about the PSF is available, a~\emph{semi-blind} case when some (incomplete) information about the PSF is available (for instance its parametric form), and a~\emph{non-blind} case, when the PSF is known completely.

In classical image processing monographs~\cite{pratt, gonzales}, the first methods of solving Eq.~\eqref{model1} for $f$ were proposed for the non-blind case. The semi-blind and blind cases are much more difficult. Despite their extensive study (see~\cite{kund1, hnedakniha, motiondeblurring} for a~survey), they have not been fully resolved yet. Although some of the current image deconvolution methods yield good results, they rely on prior knowledge incorporated into regularization terms or other constraints. If such prior knowledge is not available, the methods may converge to solutions that are far from the ground truth. If noise is present, the inverse problem becomes even more ill posed and its solution numerically less stable.

\begin{figure}[htb]
    \centering
    \includegraphics[width=\linewidth]{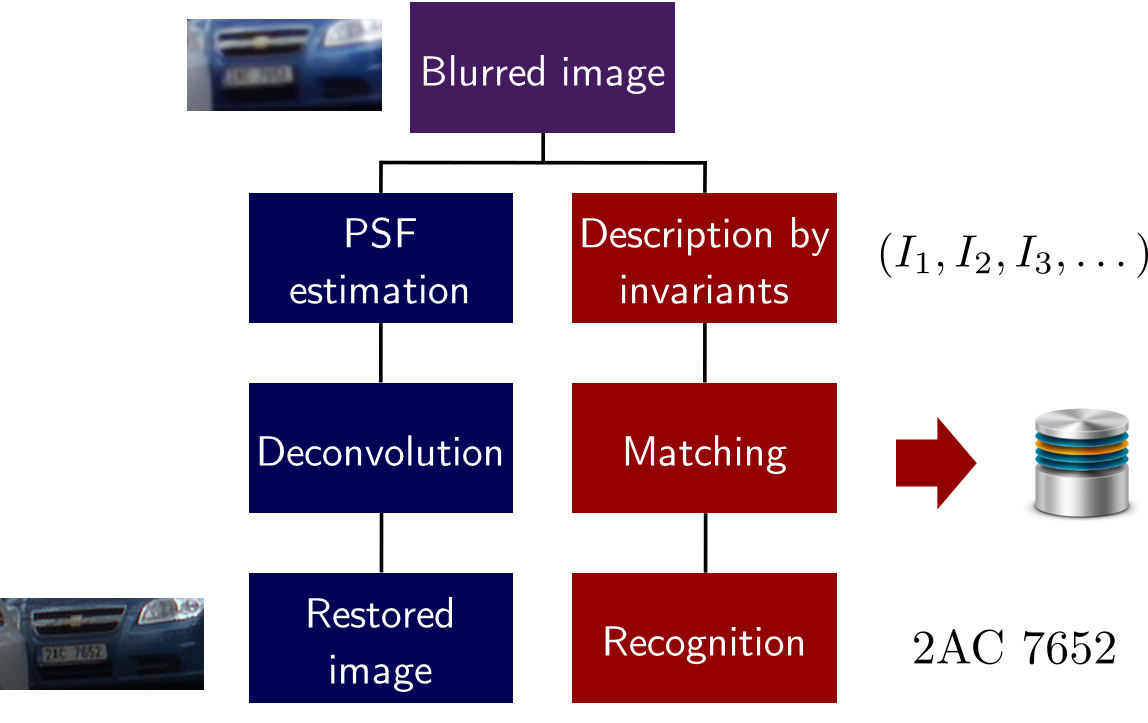}
    \caption{The flowchart of image restoration (left) and of the recognition by blur invariants (right).}
    \label{Fig:flowchartblur}
\end{figure}

In the 1990s, some researchers not only realized all the above-mentioned difficulties connected with the solving of Eq.~\eqref{model1} but also found out that in many applications a~complete restoration of~$f$ is not necessary and can be avoided, provided that an appropriate image representation is used. A~typical example is a~recognition of objects and patterns in blurred images, where a~blur-robust object description forms a~sufficient input for the classifier (see Fig.~\ref{Fig:flowchartblur} for the illustration of the difference between the recognition and restoration approaches). This led to the introduction of the idea of \emph{blur invariants}, which are powerful in many semi-blind cases. Roughly speaking, blur invariant~$I$ is a~functional fulfilling the constraint $I(f) = I(f*h)$ for any~$h$ from a~certain set $\Ss$ of admissible PSFs. Many systems of blur invariants have been proposed so far (see~\cite{2D3D}, Chapter 6 and further references thereof). They differ from one another by the assumptions on the PSF, by the mathematical tools used for invariant construction, by the domain in which the invariants are defined, and by the application area for which the invariants were designed.

The main drawback of all current blur invariants is that they lack a~unified mathematical framework. For each class of PSFs, the invariants had to be derived ``from scratch'', which means that one had to prove the invariance property for each PSF type separately.
Although one can re-use similar calculation techniques for various families of PSFs,
both explicit derivation and formal proof of invariance always had to be customized for any particular family of the PSFs.

We discover a~unified theoretical background of blur invariants which is presented in this paper for the first time. We show that all previously published blur invariants are particular cases of a~general theory, which provides this topic with a~roof. Two key theorems, referred here as Theorem~\ref{FTblur_invariants_general} and Theorem~\ref{completeness_theorem}, are formulated and proved here \emph{regardless of the particular PSF type}. This is a~significant theoretical contribution of this paper, which has an immediate practical consequence. If we want to derive blur invariants w.r.t. a~new class of PSFs, Theorems~\ref{FTblur_invariants_general} and~\ref{completeness_theorem} offer the solution directly, provided that the PSF in question complies with the assumption of the Theorems. Verifying that is, however, much easier than the construction of the invariants from the beginning.


\subsection{State of the art of blur invariants}
\label{State_of_the_art_of_blur_invariants}

Unlike geometric invariants, which can be traced over two centuries back to Hilbert~\cite{hilbert}, blur invariants are a~relatively new topic. The problem formulation and the basic idea appeared originally in the '90s in the series of papers by Flusser et al.~\cite{FluSuk:95a, FluSuk:96a, FluSuk:980022}. The invariants presented in these pioneer papers were found heuristically without any theoretical background. The authors observed that certain moments of a~symmetric PSF vanish. They derived the relation between the moments of the blurred image and the original and thanks to the vanishing moments of the PSF they eliminated the non-zero PSF moments by a~recursive subtraction and multiplication. They did it for axially symmetric~\cite{FluSuk:95a, FluSuk:96a} and centrosymmetric~\cite{FluSuk:980022} PSFs. These invariants, despite their heuristic derivation and the restriction to centrosymmetric PSFs, have been adopted by many researchers in further theoretical studies~\cite{hui:Legendre, wee_Legendre, xiubin:Legendre, xiubin,xiubin:pseudo_Zernike, cheb-cabmi, xin-SIFT, KauFlu:11, zhang, ZhangPR02, FluBolZit, FluBol:pami03, candocia, Ojansivu1,ojansivu:spl, makaremi:wvl, makaremi:tip, gali:radon, Ojansivu_Heikkila_LPQ1} 
and in many application-oriented papers~\cite{bentoutours, zhaoxia, huetal, bentoutou, bentoutou3D, bentoutou-CVIU, bob, Ahonen_Rahtu_Ojansivu_Heikkila_LPQ4, zhangfuze, yap}.

By a~similar heuristic approach, various invariants to circularly symmetric blur~\cite{FluZit:icpr04, zhu:ZernikePAA, beijing:Zernike, hanjie:Zernike, xiubin:pseudo_Zernike, cheb-cabmi, FMM}, linear motion blur~\cite{FluSuk:96, stern, weed, woodslice, water:TIP, guan-biologically, wang-sinusoidal}, and Gaussian blur~\cite{tianxu:gauss, xiao-gauss} have been proposed.

Significant progress of the theory of blur invariants was made by Flusser et al. in~\cite{pami:2015}, where the invariants to arbitrary $N$-fold rotation symmetric blur were proposed. In that paper, a~derivation based on a~mathematical theory rather than on heuristics was presented for the first time. The invariants were constructed by means of projection of the blurred image onto the subspace of the PSFs. A~similar result achieved by another technique was later published by Pedone et al.~\cite{matteo:tip}.

The main limitation of all the above mentioned methods is their restriction to a~given single class of blurs. In other words, the authors first defined the blur type they were considering and then they derived the invariants based on the specific properties of the blur. In this paper, we approach the problem the other way round. Regardless of the particular blur, we find a~general formula for blur invariants. Then, for any type of admissible PSF, this general formula immediately provides specific invariants. This is the major original contribution of this paper that differentiates the proposed theory from the previous ones.

\section{Mathematical preliminaries}
\label{Preliminaries}

\begin{definition} \label{image_function}
    By an \emph{image function} (or \emph{image}) we understand any real function $f \in L_1 \left( \R^d \right) \cap L_2 \left( \R^d \right)$ with a~compact support.\footnote{Symbol  $L_p \left( \R^d \right)$ denotes a space of all functions of $d$ real variables such that $\int |f|^p < \infty$.} The set of all image functions is denoted as $\I$.
\end{definition}

For the convenience, we assume the Dirac $\delta$-function to be an element of $\I$.\footnote{From mathematical point of view, this is formally incorrect since $\delta \notin L_1 \cap L_2$. We could correctly include $\delta$ by means of theory of distributions but this would be superfluous for the purpose of this paper.}
    

\begin{definition} \label{moments}
    Let $\B = \{\pi_{\kk}(\x)\}$ be a~set of $d$-variable polynomials. Then the integral
    \begin{equation}
        M_{\pp}^{(f)} = \int \pi_{\pp}(\x) f(\x) \ud \x
    \end{equation} 
    is called \emph{moment} of function $f$ with respect to set $\B$. The non-negative integer $\mathbf{|p|}$, where $\pp$ is a~$d$-dimensional multi-index, is the \emph{order} of the moment.
\end{definition}

Moments are widely used descriptors of compactly-supported functions. Depending on $\B$, we recognize various types of moments. If $\pi_{\kk}(\x) = \x^{\kk}$ we speak about \emph{geometric} moments. If $d=2$ and $\pi_{pq}(x,y) = (x+iy)^p(x-iy)^q$, we obtain \emph{complex} moments. If the polynomials $\pi_{\kk}(\x)$ are orthogonal (or orthogonal with a~weight), we get \emph{orthogonal} (OG) moments. Legendre, Zernike, Chebyshev, and Fourier-Mellin moments are the most common examples. For the theory of moments and their application in image analysis we refer to~\cite{2D3D}.

\begin{definition} \label{projector}
    A linear operator $P: \I \to \I$ is called a~\emph{projection operator} (or \emph{projector} for short) if it is idempotent, i.e. $P^2 = P$.
\end{definition}

The image space $\I$ and any projector $P$ satisfy the following Lemma.
\begin{lemma} \label{lemma1}
The following statements hold:
    \begin{enumerate}
        \item 
            $\I$ is dense both in $L_1$ and $L_2$. 
            
            \emph{This proposition shows that the image space is sufficiently ``large''.}
        \item 
            For any $f,g \in \I$ their convolution exists and ${f*g \in \I}$.
            
            \emph{This \emph{convolution closure} property follows from Young's inequality.} 
        \item 
            For any $f \in \I$, its Fourier transform $\F(f)$ exists.
        \item 
            For any $f \in \I$, its moments w.r.t. arbitrary $\B$ exist and are finite.
        \item  
            Let $\Ss = P(\I)$. The set $\Ss$ is also a~vector space and $\I$ can be expressed as a~direct sum ${\I = \Ss \oplus \A }$, where $\A$ is called the \emph{complement} of $\Ss$ and is also a~vector space.
            Any $f \in \I$ can be unambiguously written as a~sum $f = Pf + f_A$, where $Pf$ is a~projection of $f$ onto $\Ss$ and $f_A \in \A$ is simply defined as $f_A = f - Pf$.
        \item 
            For any $f \in \I$, $Pf_A = 0$ and, consequently, $\Ss \cap \A = \{0\}$. If $f \in \Ss$ then $f = Pf$ and vice versa.
        
    \end{enumerate}
\end{lemma}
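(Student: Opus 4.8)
My plan is to exploit the natural split in the statement: items (1)--(4) are analytic properties of the space $\I$, whereas items (5)--(6) are purely algebraic consequences of the idempotency $P^2 = P$. Accordingly, the ``proof'' is mostly a verification that the hypotheses of several standard results are met, followed by one short computation with $P$. I would treat the two groups separately.

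For the analytic group I would argue as follows. \emph{Density (1):} the space $C_c(\R^d)$ of continuous, compactly supported functions is dense in $L_p(\R^d)$ for every $1 \le p < \infty$, and $C_c(\R^d) \subseteq \I$ because a continuous function with compact support is bounded and hence lies in $L_1 \cap L_2$; density of $\I$ in both $L_1$ and $L_2$ follows at once. \emph{Convolution closure (2):} Young's inequality with exponents $(p,q,r)=(1,1,1)$ yields $f*g \in L_1$, and with $(p,q,r)=(2,1,2)$ yields $f*g \in L_2$, both admissible since $f,g \in L_1 \cap L_2$; moreover $\operatorname{supp}(f*g) \subseteq \operatorname{supp} f + \operatorname{supp} g$ is compact, so $f*g \in \I$, and the formally admitted case $f=\delta$ reduces to $\delta * g = g \in \I$. \emph{Fourier transform (3):} since $f \in L_1$, the defining integral of $\F(f)$ converges absolutely. \emph{Moments (4):} on the compact support $K$ of $f$ every polynomial $\pi_{\pp}$ is bounded, whence $|M_{\pp}^{(f)}| \le (\sup_K |\pi_{\pp}|)\,\|f\|_1 < \infty$.

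For the algebraic group I would first note that $\I$ is a real vector space and that $\Ss = P(\I)$ and $\A = (\mathrm{Id}-P)(\I)$, being images of the linear maps $P$ and $\mathrm{Id}-P$, are subspaces contained in $\I$. Every $f$ then decomposes as $f = Pf + f_A$ with $f_A := f-Pf \in \A$. The key identity is $P f_A = Pf - P^2 f = 0$, which uses $P^2=P$ directly and settles the first claim of (6). Directness of the sum follows: if $g \in \Ss \cap \A$, then $g \in \Ss$ forces $g = Pg$ (from $g = Ph$ we get $Pg = P^2 h = Ph = g$), while $g \in \A$ forces $Pg = 0$ by the identity just shown, so $g = Pg = 0$; hence $\Ss \cap \A = \{0\}$ and $\I = \Ss \oplus \A$, which completes (5) and (6). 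The equivalence $f \in \Ss \Leftrightarrow f = Pf$ is the forward implication just proved together with the trivial converse $Pf \in P(\I) = \Ss$.

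Since the content is elementary, there is no single hard step; the only places requiring care are the convolution closure, where one must establish membership in $L_1$ \emph{and} $L_2$ \emph{and} compactness of the support simultaneously (two distinct uses of Young's inequality plus the support-sum bound), and the bookkeeping around the formally admitted $\delta$-function, which must be checked to respect each of items (2)--(4). Everything in items (5)--(6) rests solely on the identity $P^2 = P$.
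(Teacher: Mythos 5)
Your proposal is correct and follows essentially the same route the paper intends: the paper states Lemma~\ref{lemma1} without a detailed proof, giving only the inline hints (Young's inequality for item (2), standard density facts for item (1)) that you flesh out, and your handling of items (5)--(6) purely from the idempotency $P^2 = P$ together with $\Ss = P(\I)$ and $\A = (\mathrm{Id}-P)(\I)$ is the standard argument the paper implicitly relies on. The only cosmetic remark is that your check of the formally admitted $\delta$-function is spelled out only for item (2), but the analogous verifications for items (3)--(4) are immediate, so there is no gap.
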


\begin{definition} \label{OG projector}
    Projector $P$ is called \emph{orthogonal} (OG), if the respective subspaces $\Ss$ and $\A$ are orthogonal.
\end{definition}


\section{Blur Invariants}
\label{Sec. Blur Invariants}

In this Section, we show how blur invariants can be constructed by means of suitable projectors. Let $\Ss$ be, from now on, the set of blurring functions (PSFs), with respect to which we want to design the invariants. Any meaningful $\Ss$ must contain at least one non-zero function and must be closed under convolution. In other words, for any $h_1, h_2 \in \Ss$ must be $h_1 * h_2 \in \Ss$. This is the basic assumption without which the question of invariance does not make sense. If $\Ss$ was not closed under convolution, then any potential invariant would be in fact invariant w.r.t. convolution with functions from the ``convolution closure'' of $\Ss$, which is the smallest superset of $\Ss$ closed to convolution.

Under the closure assumption, $(\Ss,*)$ forms a~commutative \emph{semi-group} (it is not a~group because the existence of inverse elements is not guaranteed). Hence, the convolution may be understood as a~\emph{semi-group action} of $\Ss$ on $\I$.  The convolution defines the following equivalence relation on $\I$: $f \sim g$ if and only if there exist $h_1, h_2 \in \Ss$ such that $h_1*f = h_2*g$. Thanks to the closure property of $\Ss$ and to the commutativity of convolution, this relation is transitive, while symmetry and reflexivity are obvious. This relation factorizes $\I$ into classes of \emph{blur-equivalent images}. In particular, all elements of $\Ss$ are blur-equivalent. 
The image space partitioning and the action of the projector are visualized in Figure~\ref{fig:projection}.

\begin{figure}[!htbp]
   \centering
    \includegraphics[width=\linewidth]{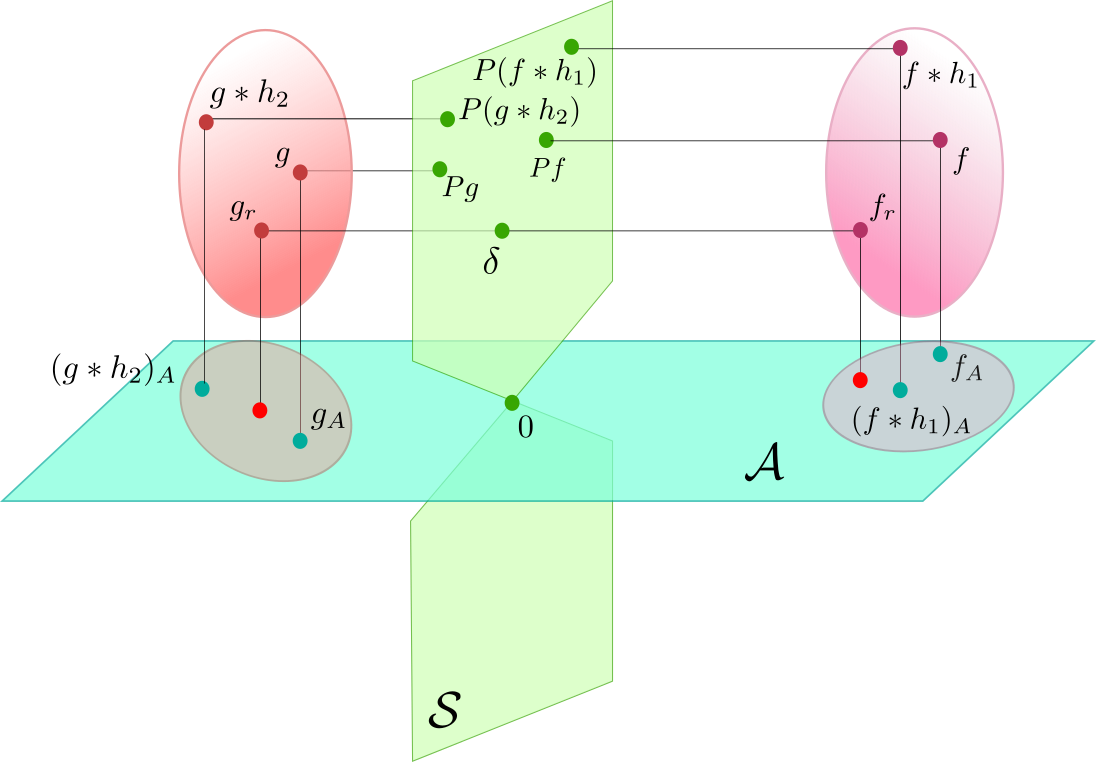}
    \caption{Partitioning of the image space. Projection operator $P$ decomposes image $f$ into its projection $Pf$ onto $\Ss$ and its complement~$f_A$, which is a~projection onto $\A$. The ellipsoids depict blur-equivalent classes. Blur-invariant information is 
    contained in the primordial images $f_r$ and $g_r$ (see the text).}
    \label{fig:projection}
\end{figure}




Now we are ready to formulate the following \emph{General theorem of blur invariants} (GTBI), which performs the main contribution of the paper and a~significant difference from all previous work on this field.

\begin{theorem}[GTBI] \label{FTblur_invariants_general}
    Let $\Ss$ be a~linear subspace of $\I$, which is closed under convolution and correlation. Let $P$ be an orthogonal projector of $\I$ onto $\Ss$. Then
    \begin{equation}
        I(f)(\uu) \equiv \frac{\F(f)(\uu)}{\F(Pf)(\uu)}
    \end{equation}
    is an invariant w.r.t. a~convolution with arbitrary $h \in \Ss$ at all frequencies $\uu$ where $I(f)$ is well defined.
\end{theorem}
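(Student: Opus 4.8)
The plan is to reduce the statement to a single \emph{equivariance} (intertwining) identity,
\[ P(f*h) = (Pf)*h \qquad (f\in\I,\ h\in\Ss), \]
from which the invariance follows in one line. Indeed, by the convolution theorem $\F(f*h)=\F(f)\,\F(h)$ (all transforms exist by Lemma~\ref{lemma1}), so the numerator of $I(f*h)$ is $\F(f)\,\F(h)$; and the identity rewrites the denominator as $\F\big((Pf)*h\big)=\F(Pf)\,\F(h)$. Wherever $I(f)$ is well defined and $\F(h)(\uu)\neq0$, the factor $\F(h)(\uu)$ cancels and $I(f*h)(\uu)=I(f)(\uu)$. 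At frequencies with $\F(h)(\uu)=0$ both transforms vanish, so $I(f*h)$ is simply undefined there; this is precisely the ``where well defined'' proviso, and nothing needs to be proved.

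Everything therefore rests on the equivariance identity, which I would prove by splitting $f=Pf+f_A$ with $f_A=f-Pf\in\A$ (Lemma~\ref{lemma1}). Linearity of both convolution and $P$ gives
\[ P(f*h)=P\big((Pf)*h\big)+P(f_A*h). \]
The first term is immediate: $Pf,h\in\Ss$, so by closure under \emph{convolution} $(Pf)*h\in\Ss$, and elements of $\Ss$ are fixed by $P$ (Lemma~\ref{lemma1}), hence $P\big((Pf)*h\big)=(Pf)*h$. It remains to show the second term vanishes, i.e. that $f_A*h\in\A$, or equivalently $P(f_A*h)=0$.

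This is the crux, and it is where orthogonality and closure under \emph{correlation} come in. Since $P$ is an orthogonal projector, $\A$ is orthogonal to $\Ss$ (Definition~\ref{OG projector}), so it suffices to check $\langle f_A*h,\,s\rangle=0$ for every $s\in\Ss$. The essential analytic fact is that correlation is the adjoint of convolution: for real functions,
\[ \langle f_A*h,\,s\rangle=\int (f_A*h)(\x)\,s(\x)\ud\x=\int f_A(\x)\,(h\star s)(\x)\ud\x=\langle f_A,\,h\star s\rangle, \]
where $h\star s$ denotes the correlation of $h$ and $s$; the interchange is justified by Fubini together with the $L_1\cap L_2$ integrability of Definition~\ref{image_function}. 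Now $h,s\in\Ss$ and $\Ss$ is closed under correlation, so $h\star s\in\Ss$; since $f_A\in\A$ is orthogonal to $\Ss$, the last inner product is $0$. Thus $f_A*h$ (which lies in $\I$ by Lemma~\ref{lemma1}) is orthogonal to $\Ss$, hence $f_A*h\in\A$ and $P(f_A*h)=0$, completing the identity.

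I expect the real obstacle to be exactly this orthogonality step: correctly identifying correlation as the adjoint of convolution and confirming that $h\star s$ returns to $\Ss$. This is also where the double hypothesis earns its keep — convolution closure disposes of the $Pf$ part, while correlation closure is precisely what forces the complement part into $\A$. Everything else (existence of the convolutions and transforms, and the Fubini interchange) is routine given the preliminaries of Section~\ref{Preliminaries}.
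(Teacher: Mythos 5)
Your proof is correct and follows the same skeleton as the paper's: reduce the invariance to the intertwining identity $P(f*h)=Pf*h$, split $f=Pf+f_A$, dispose of the first term by convolution closure together with the fixed-point property of $P$ on $\Ss$, and show $f_A*h\perp\Ss$ using closure of $\Ss$ under correlation. The one genuine divergence is how that last orthogonality is verified. The paper passes to the Fourier domain: by Plancherel's theorem and the convolution and correlation theorems it computes, for $a\in\A$ and $h_1,h_2\in\Ss$,
\begin{equation*}
\langle a*h_1,h_2\rangle=\langle \F(a*h_1),\F(h_2)\rangle=\langle a,\,h_2\circledast h_1\rangle=0\,,
\end{equation*}
whereas you stay in the image domain and obtain the same adjointness identity $\langle f_A*h,\,s\rangle=\langle f_A,\,h\star s\rangle$ directly by Fubini, which is legitimate here since all factors are compactly supported and lie in $L_1\cap L_2$. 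Your route is slightly more elementary --- no Fourier machinery is needed for the crux step, only absolute integrability --- while the paper's route keeps the entire argument inside the Fourier formalism it uses everywhere else; the two computations are the same identity read on opposite sides of Plancherel. A small bonus of your write-up is that you explicitly observe that at frequencies where $\F(h)(\uu)=0$ the quotient $I(f*h)$ is undefined rather than unequal to $I(f)$, a point the paper's proof passes over silently.
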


\begin{proof}
    Let us assume that $P$ is ``distributive'' over a~convolution with functions from $\Ss$, which means
    $P(f*h) = Pf * h$ for arbitrary $f$ and any $h \in \Ss$. Then the proof is trivial,
    we just employ the basic properties of Fourier transform:
    \begin{align}
        I(f*h) \equiv & \ \frac{\F(f*h)}{\F(P(f*h))} = \frac{\F(f) \cdot \F(h)}{\F(Pf*h)} = \nonumber \\
        = & \frac{\F(f) \cdot \F(h)}{ \F(h) \cdot \F(Pf)} = I(f)\,. 
    \end{align} 
    
    The ``distributive property" of $P$ is equivalent to the constraint that the complement $\A$ is closed w.r.t. convolution with functions from $\Ss$. This follows from
    \begin{align} \label{proj-h1}
        P(f*h) = & P((Pf + f_A)*h) = P(Pf*h + f_A*h)  \nonumber \\
         = & Pf*h + P(f_A*h)\,.
    \end{align}
    
    Now let us show that this constraint is implied by  the orthogonality of $P$ regardless of its particular form.
 
    Since $\Ss \perp \A$ and Fourier transform on $L_1 \cap L_2$ preserves the scalar product (this property is known as Plancherel Theorem), then $\F(\Ss) \perp \F(\A)$. Let us consider arbitrary functions $a \in \A$ and $h_1,h_2 \in \Ss$. Using the Plancherel Theorem, the convolution theorem,
    and the correlation theorem (the correlation $\circledast$ of two functions is just a~convolution with a~flipped function), we have
    \begin{align}
        \langle a*h_1,h_2 \rangle = &\langle \F(a*h_1), \F(h_2) \rangle = \int A \cdot H_1 \cdot H_2^* = \nonumber \\
         = & \langle A, H_1^* \cdot H_2 \rangle = \langle a, h_2 \circledast h_1 \rangle = 0 \,.
    \end{align}
    The last equality follows from the closure of $\Ss$ w.r.t. correlation. Hence, $\A$ has been proven to be closed w.r.t. convolution with functions from $\Ss$, which completes the entire proof.
\end{proof}

The invariant $I(f)$ is not defined if $Pf = 0$, which means this Theorem cannot be applied if $f \in \A$. In all other cases, $I(f)$ is well defined almost everywhere. Since $\Ss$ contains compactly-supported functions only, $\F(Pf)(\uu)$ cannot vanish on any open set and therefore the set of frequencies, where $I(f)$ is not defined, has a~zero measure.\footnote{This may not be true if $\I$ and $\Ss$ contained functions of unlimited support. Then $\F(Pf)(\uu)$ might vanish on a~nonzero-measure set, which would decrease the discrimination power of $I$.} In addition to the blur invariance, $I$ is also invariant w.r.t. correlation with functions from $\Ss$. This is a~``side-product'' of the assumptions imposed on $\Ss$. The proof of that is the same as before, with only the operations convolution and correlation swapped.

Under the assumptions of the GTBI, $\Ss$ itself is always an equivalence class and $\delta \in \Ss$ (to see this, note that for arbitrary $h \in \Ss $ we have $h = \delta*h = P(\delta*h) = P\delta * h$ which leads to $P\delta = \delta$).

The GTBI is a~very strong theorem because it constructs the blur invariants in a~unified form regardless of the particular class of the blurring PSFs and regardless of the image dimension $d$. The only thing we have to do in a~particular situation is
 to find, for a~given subspace $\Ss$ of the admissible PSFs, an orthogonal projector $P$.
 This is mostly much easier job than to construct blur invariants ``from scratch" for any $\Ss$. This is the most important distinction from our previous paper~\cite{pami:2015}, where the invariants were constructed specifically for $N$-fold symmetric blur without a~possibility of generalization.
 
Before we proceed further, let us show that the assumptions laid on $\Ss$ and $P$ cannot be skipped.

As a~counterexample, let us consider a~1D case where $\Ss$ is a~set of even functions. Let $P$ be defined such that $Pf(x) = f(|x|)$. So, $P$ is a~kind of ``mirroring'' of $f$ and actually, it is a~linear (but not orthogonal) projector on $\Ss$. In this case, $\A$ is a~set of functions that vanish for $x \geq 0$. Clearly, $\A$ is not closed to convolution with even functions and functional $I$ defined in GTBI is not an invariant. 
  
Let us consider another example, again in 1D. Let $\Ss$ be a~set of functions that vanish for any $x < 0$. $\Ss$~is a~linear subspace closed to convolution but it is not closed to correlation. Let us define operator $P$ as follows: $Pf(x) = f(x)$ if $x \geq 0$ and $Pf(x) = 0$ if $x < 0$. Obviously, $P$ is a~linear orthogonal projector onto $\Ss$. However, $\A$ is again not closed to convolution with functions from $\Ss$ and GTBI does not hold. These two simple examples show, that the assumptions of convolution and correlation closure of $\Ss$ and orthogonality of $P$ cannot be generally relaxed (although GTBI may stay valid in some cases even if these assumptions are violated, see Section~\ref{Sec. Examples}).

The property of blur invariance does not say anything about the ability of the invariant to distinguish two different images. In an ideal case, the invariant should be able to distinguish \emph{any} two images belonging to distinct blur-equivalence classes (images sharing the same equivalence class of course cannot be distinguished due to the invariance). Such invariants are called \emph{complete}. The following \emph{completeness theorem} shows that $I$ is a~complete invariant within its definition area.

\begin{theorem}[Completeness theorem] \label{completeness_theorem}
    Let $I$ be the invariant defined by GTBI and let $f, g \in \I \setminus \A$. Then $I(f) = I(g)$ almost everywhere
    if and only if $ f \sim g$. 
\end{theorem}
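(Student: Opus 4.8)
The plan is to prove the two implications separately. The direction $f \sim g \Rightarrow I(f) = I(g)$ is essentially a corollary of the invariance already established in the GTBI, whereas the real completeness content lies in the converse, which I would obtain by clearing denominators and invoking the injectivity of the Fourier transform.

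For the forward direction, suppose $f \sim g$, so that there exist $h_1, h_2 \in \Ss$ with $h_1 * f = h_2 * g$. By the GTBI and the commutativity of convolution, $I(h_1 * f) = I(f)$ and $I(h_2 * g) = I(g)$ wherever these invariants are defined. I would first check that $h_1 * f \notin \A$ so that $I(h_1 * f)$ makes sense: using the distributive property $P(h_1 * f) = h_1 * Pf$ together with $\F(h_1 * Pf) = \F(h_1)\cdot\F(Pf)$, and noting that neither factor vanishes on an open set (Fourier transforms of compactly supported functions are analytic), their product is nonzero almost everywhere, whence $h_1 * f \notin \A$. Since $h_1 * f = h_2 * g$, it then follows that $I(f) = I(h_1 * f) = I(h_2 * g) = I(g)$ almost everywhere.

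For the converse, assume $I(f) = I(g)$ almost everywhere, i.e. $\F(f)/\F(Pf) = \F(g)/\F(Pg)$ on the co-null set where both quotients are defined. Clearing denominators gives
\begin{equation}
    \F(f)\cdot\F(Pg) = \F(g)\cdot\F(Pf)
\end{equation}
almost everywhere. Because $f, g \in \I$ and $Pf, Pg \in \Ss \subset \I$, both sides are continuous (indeed analytic) functions, so an almost-everywhere identity forces equality at every frequency. By the convolution theorem the two sides equal $\F(f * Pg)$ and $\F(g * Pf)$, and both $f * Pg$ and $g * Pf$ lie in $\I$ by the convolution closure of Lemma~\ref{lemma1}. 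Injectivity of the Fourier transform then yields $f * Pg = g * Pf$. Setting $h_1 = Pg$ and $h_2 = Pf$, which belong to $\Ss$ and are nonzero precisely because $f, g \notin \A$, I obtain $h_1 * f = h_2 * g$, i.e. $f \sim g$.

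The main obstacle is the measure-theoretic bookkeeping rather than any deep argument: the invariant $I$ is only defined off a null set, so the quotient identity cannot be manipulated naively. The resolution is to pass to the cleared-denominator form $\F(f)\F(Pg) = \F(g)\F(Pf)$ and then exploit the continuity and analyticity of Fourier transforms of compactly supported functions---the same property already used in the GTBI to guarantee that $\F(Pf)$ cannot vanish on an open set---to upgrade the almost-everywhere equality to a genuine identity of functions before applying Fourier injectivity.
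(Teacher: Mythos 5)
Your proof is correct and takes essentially the same route as the paper's: the easy direction follows from the blur invariance established in the GTBI, and the substantive direction is obtained by choosing $h_1 = Pg$ and $h_2 = Pf$ and concluding $f*h_1 = g*h_2$, hence $f \sim g$. The only difference is that you make explicit the steps the paper leaves implicit---clearing denominators, upgrading the almost-everywhere identity via analyticity of Fourier transforms of compactly supported functions, and invoking Fourier injectivity---which is exactly the bookkeeping the paper's one-line argument silently relies on.
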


\begin{proof}
    The proof of the backward implication follows immediately from the blur invariance of $I$. To prove the forward implication, we set $h_1 = Pg$ and $h_2 = Pf$. Then it holds $ f*h_1 = g*h_2 $, which means $ f \sim g$ due to the definition of the equivalence class.
\end{proof}
To summarize, $I$ cannot distinguish functions belonging to the same equivalence class due to the invariance and functions from $\A$ since they do not lie in its definition area. All other functions are fully distinguishable. Note that the completeness may be violated on other image spaces, for instance, on a~space of functions with unlimited support where we find such $f$ and $g$ that $I(f) = I(g)$ at all frequencies where both $I(f)$ and $I(g)$ are well defined but $f$ and $g$ belong to different equivalence classes.

Understanding what properties of $f$ are reflected by $I(f)$ is important both for theoretical considerations as well as for practical application of the invariant. $I(f)$ is a~ratio of two Fourier transforms. As such, it may be interpreted as deconvolution of $f$ with the kernel $Pf$. This ``deconvolution'' eliminates the part of $f$ belonging to $\Ss$ (more precisely, it transfers $Pf$ to $\delta$-function) and effectively acts on the $f_A$ only:
\begin{align*}
    I(f) = & \frac{\F(f)}{\F(Pf)} = \frac{\F(Pf) + \F(f_A)}{\F(Pf)} = 1 + \frac{\F(f_A)}{\F(Pf)}.
\end{align*}

$I(f)$ can be viewed as a~Fourier transform of so-called \emph{primordial image} $f_r$. Even if the primordial image itself may not exist (the existence of $\F^{-1}(I(f))$ is not guaranteed in $\I$), it is a~useful concept that helps to understand how the blur invariants work. The primordial image is unique for each equivalence class, it is the ``most deconvolved'' representative of the class. Two images $f$ and $g$ share the same equivalence class if and only if $f_r = g_r$. For instance, the primordial image of all elements of $\Ss$ is $\delta$-function.

Any element of the equivalence class can be reached from the primordial image through a~convolution. Any features, which describe the primordial image, are unique blur-invariant descriptors of the entire equivalence class. At the same time, the primordial image can also be viewed as a~kind of normalization. It plays the role of a~canonical form of $f$, obtained as the result of the ``maximally possible'' deconvolution of $f$ (see Fig.~\ref{fig:primordial} for schematic illustration).

\begin{figure}[!htbp]
    \centering
    \includegraphics[width=\linewidth]{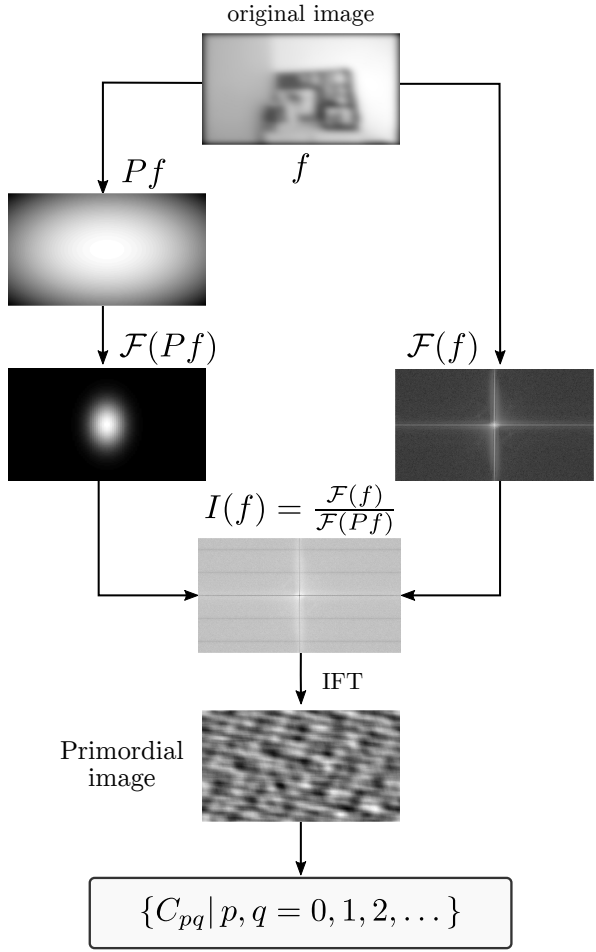}
    \caption{The concept of the primordial image: The blurred image is projected onto $\Ss$ and this projection is used to ``deconvolve" the input image in the Fourier domain. Blur-invariant primordial image is obtained as a~seeming 
    Fourier inversion of $I(f)$. Its moments are blur invariant and can be calculated directly from $f$.}
    \label{fig:primordial}
\end{figure}

As the last topic in this section, we briefly analyze the robustness of $I(f)$ to noise. Let us assume an additive zero-mean white noise, so we have $g = f*h + n$ and, consequently, $Pg = h*Pf + Pn$. As we will see in Section \ref{Sec. Examples}, all meaningful projection operators contain summation/integration over a certain set (often large)  of pixels, which makes $Pn$ to converge to the mean value of $n$, which is zero. So, we have
\begin{eqnarray}
        I(g) \equiv \frac{\F(g)}{\F(Pg) } = 
        \frac{\F(f) \cdot \F(h) + \F(n)}{\F(Pf) \cdot \F(h)} =  \nonumber \\
      =  I(f) + \frac{ \F(n)}{\F(Pf) \cdot \F(h)}.
      \end{eqnarray}
Considering the magnitude of the second term, note that $|\F(n)(\uu)| = \sigma $ because the noise is white. Hence, at least at low frequencies where $\F(Pf) \cdot \F(h)$ dominates, this term is close to zero and $I$ exhibits a robust behavior as $I(g) \doteq I(f)$. However, this may be violated at high frequencies where $\F(Pf) \cdot \F(h)$ is often low.

\section{Invariants and Moments}
\label{Sec. Invariants ans Moments}

The blur invariants defined in the frequency domain by GTBI may suffer from several drawbacks when we use them in practical object recognition tasks. Since $I(f)$ is a~ratio, we possibly divide by very small numbers which requires careful numerical treatment. Moreover, if the input image is noisy, the high-frequency components of $I(f)$ may be significantly corrupted. This can be overcome by suppressing them by a~low-pass filter, but this procedure introduces a~user-defined parameter (the cut-off frequency) which should be set up with respect to the particular noise level. That is why we prefer to work directly in the image domain. Some heuristically discovered image-domain blur invariants were already published in the early papers~\cite{FluSuk:95a, FluSuk:96a,FluSuk:980022}. Here we present a~general theory, which originates from the GTBI.

A straightforward solution might be to calculate an inverse Fourier transform of $I(f)$, which leads to obtaining the primordial image $f_r$ and to characterize $f_r$ by some popular descriptors such as moments. This would, however, be time-consuming and also problematic from the numerical point of view. We would not only have to calculate the projection $Pf$, two forward and one inverse Fourier transforms, but even worse, the result may not lie in $\I$. In this Section, we show how to substantially shorten and simplify this process. We show, that the moments of the primordial image can be calculated directly from the input blurred image, without an explicit construction of $Pf$ and $I(f)$. Since $f_r$ is a~blur invariant, each its moment must be a~blur invariant, too. This direct construction of blur invariants in the image domain, again without specifying particular $\Ss$ and $P$, is the major theoretical result of the paper and performs a~very useful tool for practical image recognition.

Image moments can be defined w.r.t. arbitrary polynomial basis (see Definition~\ref{moments}). In image analysis literature, various bases have been employed to construct moment invariants~\cite{MMIPR}. There is no significant difference among them since between any two polynomial bases there exists a~transition matrix. In other words, from the theoretical point of view, all polynomial bases and all respective moments carry the same information, provide the same recognition power and generate equivalent invariants. However, working with some basis might be in a~particular situation easier than with the others, and also numerical properties and stability of the moments may differ from each other. Here we choose to work with a~basis that \emph{separates} the moments of $Pf$ and $f_A$, although equivalent invariants could be derived in any basis at the expense of the complexity of respective formulas.

Let $\B = \{\pi_{\pp}(\x)\}$ be a~polynomial basis. When considering the polynomials on a~bounded support, then $\B \subset \I$ and all moments $M_{\pp}^{(f)}$ exist and are finite. Let $\Ss$ and $P$ fulfill the assumptions of GTBI. Considering the decomposition $f = Pf + f_A$, we have for the moments
\begin{equation}
    M_{\pp}^{(f)} = M_{\pp}^{(Pf)} + M_{\pp}^{(f_A)} \,.
\end{equation}
We say that $\B$ separates the moments if there exist a~non-empty set of multi-indices $D$ such that it holds for any $f \in \I$
\begin{equation}
    M_{\pp}^{(Pf)} = M_{\pp}^{(f)}
\end{equation}
if $\pp \in D$ and
\begin{equation}
    M_{\pp}^{(Pf)} = 0
\end{equation}
if $\pp \notin D$. In other words, this condition says that the moments are either preserved or vanish under the action of $P$. If fulfilled, the condition also says that the value of $M_{\pp}^{(f_A)}$ is complementary to $M_{\pp}^{(Pf)}$. 

A sufficient condition for $\B$ to separate the moments is that $\pi_{\pp} \in \Ss$ if $\pp \in D$ and $\pi_{\pp} \in \A$ otherwise. Since $\Ss$ and $\A$ are assumed to be mutually orthogonal, the separability of such $\B$ is obvious. This has nothing to do with a~(non)orthogonality of $\B$ itself, as we show in the following simple 1D example. Let $\Ss$ be a~set of even functions and $\A$ be a~set of odd functions. Let $\pi_{p}(x) = x^p$. If we take $D = \{p=2k|k \geq 0\}$, we obtain the moment-separating polynomials.

For the given $\Ss$ and projector $P$, the existence of a~basis that separates the moments is not guaranteed, although in most cases of practical interest we can find some. If it does not exist, the moment blur invariants still can be derived. It is sufficient if the moments $M_{\pp}^{(Pf)}$ can be expressed in terms of $M_{\pp}^{(f)}$ if $\pp \in D$ and some functions of $M_{\pp}^{(f)}$ equal zero for $\pp \notin D$. This makes the derivation more laborious and the formulas more complicated but does not make a~principle difference. Anyway, to keep things simple, we try for any particular $\Ss$ to find such $\B$ that provides the moment separability.

To get the link between $I(f)$ and the moments $M_{\pp}^{(f)}$, we recall that Taylor expansion of Fourier transform is
\begin{equation} \label{expansion1}
    \F(f)(\uu) = \sum_{\pp} \frac{(-2 \pi i)^{|\pp|}}{\pp!} m_{\pp}^{(f)} \uu^{\pp}
\end{equation}
where $m_{\pp}$ is a~geometric moment. In the sequel, we assume that the power basis $\pi_{\pp}(\x) = \x^{\pp}$ separates the moments. If it was not the case, one would substitute into~\eqref{expansion1} any separating basis through the polynomial transition relation.

The GTBI can be rewritten as
\begin{equation}
    \F(Pf)(\uu) \cdot I(f)(\uu) = \F(f)(\uu)\,.
\end{equation}
All these three Fourier transforms can be expanded similarly to~\eqref{expansion1} into absolutely convergent Taylor series. Thanks to the moment separability, we can for any ${\pp} \in D$ simply write $m_{\pp}^{(Pf)} = m_{\pp}^{(f)} = m_{\pp}$.
So, we have 

\begin{align}
    \sum_{\pp \in D} \frac{(-2 \pi i)^{|\pp|}}{\pp!} m_{\pp}^{(f)} \uu^{\pp} \cdot
    \sum_{\pp} \frac{(-2 \pi i)^{|\pp|}}{\pp!} C_{\pp} \uu^{\pp} = \notag    \\
    \sum_{\pp} \frac{(-2 \pi i)^{|\pp|}}{\pp!} m_{\pp}^{(f)} \uu^{\pp} \,,
\end{align}
where $C_{\pp}$ can be understood as the moments of the primordial image $f_r$. Comparing the coefficients of the same powers of $\uu$ we obtain, for any $\pp$
\begin{align}
    \sum_{\kk \in D}^{\pp} \frac{(-2 \pi i)^{|\kk|}}{\kk!} \frac{(-2 \pi i)^{|\pp-\kk|}}{(\pp-\kk)!} m_{\kk} C_{\pp-\kk} = \notag \\
    \frac{(-2 \pi i)^{|\pp|}}{\pp!}m_{\pp} \,,
\end{align}
which can be read as
\begin{equation}
    \sum_{\kk \in D}^{\pp} \binom{\pp}{\kk} m_{\kk} C_{\pp-\kk} = m_{\pp} \,.
\end{equation}
The summation goes over those $\kk \in D$ for which $0 \leq k_i \leq p_i, \ i=1, \ldots,d$. Note that always ${\bf 0} \in D$. (To see that, it is sufficient to find an image whose zero-order moment is preserved under the projection. Such an example is $\delta$-function, because $P(\delta) = \delta$, as we already showed.)

After isolating $C_{\pp}$ on the left-hand side we obtain the final recurrence
\begin{equation} \label{blur_inv_general2}
    m_{\mathbf{0}}C_{\pp} = m_{\pp} - \sum_{\substack{\kk \in D\\ \kk \neq \mathbf{0}}}^{\pp} \binom{\pp}{\kk} m_{\kk} C_{\pp-\kk} \,.
\end{equation}

This recurrence formula is a~general definition of blur invariants in the image domain (provided that $m_{\bf 0} \neq 0$)\footnote{If $m_{\bf 0} = 0$, then $C_{\pp}$ is not defined. We find the first non-zero moment $m_\nn, \nn \in D$ and derive an analogous recurrence for $C_{\pp-\nn}$.}. Since $I(f)$ has been proven to be invariant to blur belonging to $\Ss$, all coefficients $C_{\pp}$ must also be blur invariants. The beauty of Eq.~\eqref{blur_inv_general2} lies in the fact that we can calculate the invariants from the moments of $f$, without constructing the primordial image explicitly either in frequency or in the spatial domain.


Some of the invariants $C_{\pp}$ are trivial for any $f$ and useless for recognition. We always have $C_{\mathbf{0}} = 1$ and some other invariants may be constrained depending on the index set $D$. If for arbitrary ${\pp}, \kk \in D$ also $(\pp - \kk) \in D$, then $C_{\pp} = 0$ for any ${\pp} \in D$ as can be deduced from Eq.~\eqref{blur_inv_general2} via induction. This commonly happens in many particular cases of practical interest and then only the invariants with ${\pp} \notin D$ should be used. In addition to that, some invariants may vanish depending on $f$. In particular, if $f \in \Ss$, then $C_{\pp} = 0$
for \emph{any} ${\pp} \neq \mathbf{0}$.

Numerical behavior of one particular moment invariant of the type (\ref{blur_inv_general2}) of order 7 can be seen in Fig. 
 \ref{fig:MRE_I_43}, where the mean relative error (MRE) between the invariant of the blurred and noisy image and the original one is depicted as a function of the blur size and SNR. Note that the MRE almost does not depend on the blur size (since the blur was synthetic, we eliminated the boundary effect), is below 0.2\% if the noise is mild and even for heavy noise of SNR = 10 the MRE is still below 1\%, which shows an excellent robustness.
 The behavior of other invariants is similar. However, when increasing the order of the moments used, the MRE slightly increases as well. Summarizing, the robustness to noise is determined by the robustness of the moments, which has been thoroughly studied in many papers (see \cite{2D3D} and further references thereof) and is known to be quite good.
 
 \begin{figure}
    \centering
    \includegraphics[width = \linewidth]{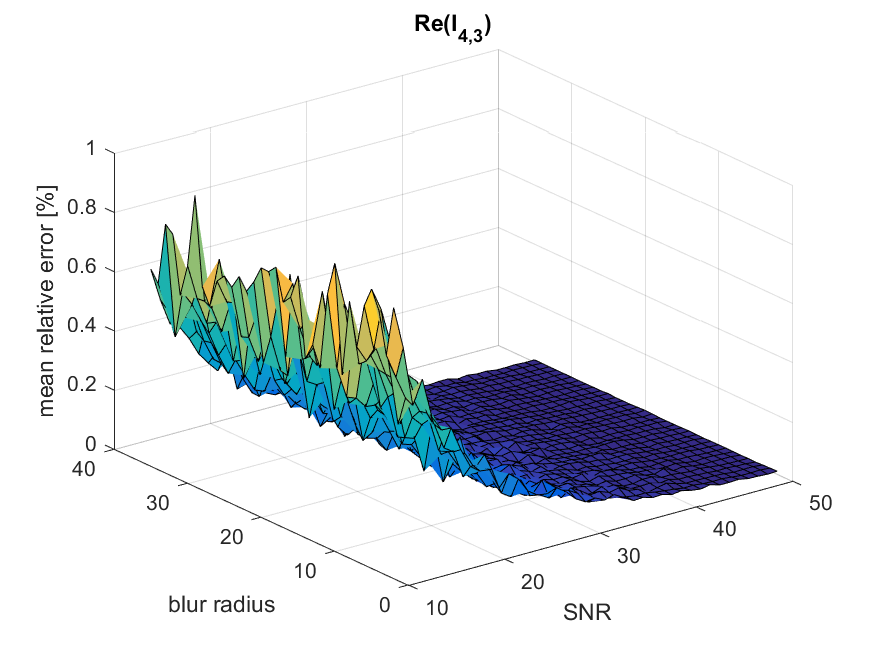}
    \caption{The MRE between the invariant of the blurred and noisy image and the original one  as a function of the blur size and SNR.}
    \label{fig:MRE_I_43}
\end{figure}
 
\section{Blur Examples}
\label{Sec. Examples}

In this Section, we show the blur invariants provided by the GTBI for several concrete choices of $\Ss$ and $P$ with a~particular focus on those of practical importance in image recognition. Some of them are equivalent to the invariants already published in earlier papers; in such cases, we show the link between them. Some other invariants are published here for the first time.

\subsection{Trivial cases}

The formally simplest case ever is $\Ss = \I$ and $Pf = f$. Although this choice fulfills the assumptions of GTBI, it is not of practical importance because the entire image space forms a~single equivalence class, and any two images are blur equivalent. Actually, GTBI yields $I(f) = 1$ for any $f$.

An opposite extreme is to choose $\Ss = \{a \delta| a \in \R \}$. This ``blur'' is in fact only a~contrast stretching. If we set $Pf = (\int f) \cdot \delta$, $P$ is not orthogonal but still $P({f*h}) = Pf*h$ and GTBI can be applied provided that $\int f \neq 0$. We obtain $I(f) = \F(f) /\int f$, which leads to a~contrast-normalized primordial image $f_r = f/\int f$.

Another rather trivial case is $\Ss = \left\{h\left|\int \right. h =1 \right\}$. This is the set of all brightness-preserving blurs without any additional constraints. We may construct $Pf = f/\int f$, which actually is a~projector; however it is neither linear nor orthogonal. Since $P(f*h) = Pf*h$, we can still apply GTBI, which yields  a~single-valued blur invariant $I(f) = \int f$, that corresponds to the primordial image $f_r = (\int f) \cdot \delta$.


\subsection{Symmetric blur in 1D}

In 1D, the only blur space $\Ss$, which can be defined generically and is of practical interest, is the space of all even functions. 1D symmetric blur invariants were firstly described in~\cite{FluSuk:97a} and later adapted to wavelet domain by Makaremi~\cite{makaremi:wvl}. Kautsky~\cite{KauFlu:11} rigorously investigated these invariants and showed how to construct them in terms of arbitrary moments. Galigekere~\cite{gali:radon} studied the blur invariants of 2D images in the Radon domain, which inherently led to 1D blur invariants.

If we consider the projector 
\begin{equation}
    Pf(x) = (f(x) + f(-x))/2
\end{equation}
 then $\A$ is a~space of odd functions, $P$ is orthogonal and GTBI can be applied directly. As for the moment expansion, the simplest solution is to use the standard monomials $\pi_p(x) = x^p$,
which separate the geometric moments for $D$ being the set of even non-negative indices.

\subsection{Centrosymmetric blur in 2D}
\label{Centrosymmetric_blur_in_2D}

Invariants w.r.t. centrosymmetric blur in 2D have attracted the attention of the majority of authors who have been involved in studying blur invariants. The number of papers on this kind of blur exceeds significantly the number of all other papers on this field. This is basically for two reasons -- such kind of blur appears often in practice and the invariants are easy to find heuristically, without the knowledge of the state-of-the-art theory of projection operators.

A natural way of defining $P$ is
\begin{equation}
    Pf(x,y) = (f(x,y) + f(-x,-y))/2.
\end{equation}
Then standard geometric moments are separated at $D = \{(p,q)|(p+q) \text{ even}\}$ and Eq.~\eqref{blur_inv_general2} leads to moment expansion that appeared in some earlier papers such as in~\cite{FluSuk:980022} and others cited in Section~\ref{State_of_the_art_of_blur_invariants}. 

This approach can be extended into 3D, where the definition of centrosymmetry is analogous. Existing 3D blur invariants~\cite{FluBolZit, FluBol:pami03} are just special cases of Eq.~\eqref{blur_inv_general2}.

\subsection{Radially symmetric blur}
\label{Radially_symmetric_blur}

Radially (circularly) symmetric PSF’s satisfying $h(r,\phi) = h(r)$ appear in imaging namely as an out-of-focus blur on a~circular aperture (see Fig.~\ref{fig:polygonal_apertures} (a) for an example). The projector $P_\infty$ is defined as
\begin{equation}
    (P_\infty f)(r) = \frac{1}{2 \pi } \int\limits_{0}^{2 \pi} f(r, \phi) \ud \phi \,.
\end{equation} 

The standard power basis does not separate the moments. This is why various radial moments have been used to ensure the separation. Basis $\B$ consists of circular harmonics-like functions of the form $\pi(r,\phi) = R_{pq}(r)\mathrm{e}^{i\chi(p,q)\phi}$, where $R_{pq}(r)$ is a~radial polynomial and $\chi(p,q)$ is a~simple function of the indices. There are several choices of $\B$, which separate the respective moments and yield blur invariants (the index set $D$ depends on the particular $\B$). Some of them were introduced even without the use of projection operators. They mostly employed Zernike moments~\cite{zhu:ZernikePAA, beijing:Zernike, hanjie:Zernike, xiubin:pseudo_Zernike}, Fourier-Mellin moments~\cite{FMM} and complex moments~\cite{FluZit:icpr04}.

\subsection{$N$-fold symmetric blur}

$N$-fold rotationally symmetric blur performs one of the most interesting cases, both from theoretical and practical points of view. This kind of blur appears as an out-of-focus blur on a~polygonal aperture. Most cameras have an aperture the size of which is controlled by physical diaphragm blades, which leads to polygonal or close-to-polygonal aperture shapes if the diaphragm is not fully open (see Fig.~\ref{fig:polygonal_apertures} (b) and (c)).

\begin{figure}[htbp]
    \centering
    \begin{subfigure}{0.23\linewidth}
        \includegraphics[width=\linewidth]{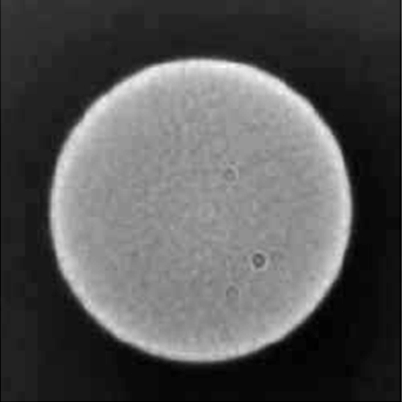}
        \caption{}
    \end{subfigure}
    \begin{subfigure}{0.23\linewidth}
        \includegraphics[width=\linewidth]{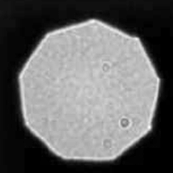}
        \caption{}
    \end{subfigure}
    \begin{subfigure}{0.23\linewidth}
        \includegraphics[width=\linewidth]{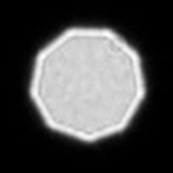}
        \caption{}
    \end{subfigure}
    \begin{subfigure}{0.23\linewidth}
        \includegraphics[width=\linewidth]{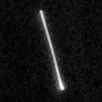}
        \caption{}
    \end{subfigure}
    \caption{Blurring PSFs obtained as photographs of a~bright point. Out-of-focus blur on 
    a~circular aperture (a), on polygonal apertures (b), (c), and directional blur (d). Aperture (a) has radial symmetry, (b) has 9-fold rotation symmetry and (c) exhibits {9-fold} dihedral symmetry.}
    \label{fig:polygonal_apertures}
\end{figure}

The blur space is defined as
\begin{equation}
    \Ss_{N} = \{h | h(r,\theta) = h(r,\theta + 2\pi/N)\} \,.
\end{equation}
$\Ss_{N}$ is a~vector space closed under convolution and correlation. We can construct projector $P_N$ as
\begin{equation}
    (P_Nf)(r, \theta) = \frac{1}{N}\sum_{j=1}^{N} f(r, \theta + \alpha_j) \,,
\end{equation} 
where $\alpha_j = 2 \pi j/N$. Since $P_N$ is an orthogonal projector, GTBI can be immediately applied. Complex moments are separated with
\begin{equation}
    D = \{(p,q)| (p-q)/N \text{ is integer} \}
\end{equation}
which allows to get particular blur invariants from Eq.~\eqref{blur_inv_general2}.

Invariants to $N$-fold symmetric blur were originally studied in~\cite{pami:2015}, where the idea of projection operators appeared for the first time. Their application for registration of blurred images was reported in~\cite{matteo:tip}.

\subsection{Dihedral blur }

The $N$-fold symmetry, discussed in the previous subsection, may be coupled with the axial symmetry. In such a~case, the number of the axes equals $N$ and we speak about the $N$-fold dihedral symmetry. Many out-of-focus blur PSFs are actually dihedral, particularly if the diaphragm blades are straight (see Fig.~\ref{fig:polygonal_apertures} (c)).

The blur space $\mathcal{D}_N$ is a~subset of $\Ss_N$ given as
\begin{equation}
    \mathcal{D}_N = \{h \in \Ss_N|\, \exists \alpha \text{ such that } h(x,y) = h^\alpha(x,y) \} \,,
\end{equation}
where ${\alpha} \in \langle 0, \pi/2 \rangle $ is the angle between the symmetry axis~$a$ and the $x$-axis and $h^\alpha(x,y)$ denotes function $h(x,y)$ flipped over~$a$.
However, the set $\mathcal{D}_N$ is \emph{not} closed under convolution if we allow various axis directions. Only if we fix the symmetry axis orientation to a~constant angle ~$\alpha$,
we get the closure property. Then we can define the projection operator $Q_{N}^{\alpha}$ as
\begin{equation}
    Q_{N}^{\alpha}f = P_N(f + f^{\alpha})/2
\end{equation}
and GTBI can be applied.

Dihedral blur invariants were firstly studied in~\cite{boldys:dihedral}. Their major limitation comes from the fact that the orientation of the symmetry axis must be apriori known (and the same for all images entering the classifier). This is far from being realistic and the only possibility is to estimate $\alpha$ from the blurred image itself~\cite{matteo:tip2015}.

\subsection{Directional blur}

Directional blur (sometimes  called linear motion blur) is a~2D blur of a~1D nature that acts in a~constant direction only. 
Directional blur may be caused by camera shake, scene vibrations, and camera or scene  motion. The velocity of the motion may vary during the acquisition, but this model assumes the motion along the line. We do not consider a general motion blur along an arbitrary curve in this paper.\footnote{Imposing no restrictions on the blur trajectory would lead to a very broad blur space, where only trivial invariants exist.} 

The respective PSF has the form (for the sake of simplicity, we start with the horizontal direction)
\begin{equation} \label{direction_blur}
    h(x,y) = h_1(x)\delta(y) \,,
\end{equation}
where $h_1(x)$ is an arbitrary 1D image function. The space $\Ss$ is defined as a~set of all functions of the form~\eqref{direction_blur}. When considering a~constant direction only, $\Ss$~is closed under 2D convolution and correlation. The projection operator $P$ is defined as
\begin{equation}
\label{Pdirectional}
    Pf(x,y) = \delta(y) \int f(x,y) \ud y \,.
\end{equation}
$P$ is not orthogonal but geometric moments are separated with $D = \{(p,q)|\, q = 0 \}$ and Eq.~\eqref{blur_inv_general2} yields the directional blur invariants in terms of geometric moments.



If the blur direction under a~constant angle $\beta$ is known, the projector $P^\beta f$ is defined analogously to~\eqref{Pdirectional} by means a~line integral along a~line which is perpendicular to the blur direction (see Fig.~\ref{fig:polygonal_apertures} (d) for an example of a~real directional PSF).



The idea of  invariants to linear motion blur appeared for the first time in~\cite{FluSuk:96} and in a~similar form in~\cite{stern}, without any connection to the projection operator. Zhong used the motion blur invariants for recognition of reflections on a~waved water surface~\cite{water:TIP}. Peng et al. used them for weed recognition from a~camera moving quickly above the field~\cite{weed} and for classification of wood slices on a~moving conveyor belt~\cite{woodslice} (these applications were later enhanced by Flusser et al.~\cite{woodslice-reaction, weed-reaction}). Other applications can be found in~\cite{guan-biologically, wang-sinusoidal}. The necessity of knowing the blur direction beforehand is, however, an obstacle to the wider usage of these invariants.

\subsection{Gaussian blur}

Gaussian blur appears whenever the image has been acquired through a~turbulent medium.
It is also introduced into the images as the sensor blur due to the finite size of the sampling pulse and may be sometimes applied intentionally as a~part of denoising.


Since Gaussian function has an unlimited support, we have to extend our current definition of $\I$ by including functions of exponential decay. We define the set $\Ss$ as
\begin{equation} \label{spaceS}
    \Ss = \{aG_\Sigma|\, a >0, \Sigma \text{ positive definite}\} \,,
\end{equation}
where $\Sigma$ is the covariance matrix which controls the shape of the Gaussian~$G_\Sigma$.

$\Ss$ is closed under convolution but it is not a~vector space. We define $Pf$ to be such element of $\Ss$ which has the same integral and covariance matrix as the image~$f$ itself. Clearly, $P^2 = P$ but $P$ is neither linear nor orthogonal. Although the assumptions of GTBI are violated, the Theorem still holds thanks to $P(f*h) = Pf*h$. The moment expansion analogous to Eq.~\eqref{blur_inv_general2} can be obtained when employing the parametric shape of the blurring function. Thanks to this, we express all moments of order higher than two as functions of the low-order ones, which substantially increases the number of non-trivial invariants.


Several heuristically found Gaussian blur moment invariants appeared in~\cite{tianxu:gauss, xiao-gauss, gauss-metric-conf, gauss-metric}. Invariants based on projection operators were proposed originally in~\cite{Gauss-TIP2015} 
for circular Gaussians and in~\cite{PR2019:gauss} for blurs with a~non-diagonal covariance matrix.



\section{Experimental Evaluation}

In this section, we show the performance of the proposed invariants in the recognition of blurred facial photographs, in template matching within a blurred scene and in two common image processing problems
-- multichannel deconvolution and multifocus fusion -- where we use the proposed invariants  for registration of blurred frames. The first  experiment was performed on simulated data, which makes possible to evaluate the results quantitatively, while the other three experiments show the performance on real images and blurs.

\subsection{Face recognition}
The use of various CNNs for recognition of blurred images has been tested recently in several papers, that studied the impact of blur on the network recognition performance~\cite{vasiljevic2016examining, zhou2017classification, dodge2016understanding, pei_CNNdegradationPAMI}. They all reported that introducing even a~small or moderate blur decreases the performance of networks trained on clear images only. Some of the above papers recommended eliminating this drawback by network fine-tuning or by augmentation of the training set with many blurred versions of the training images, however at the expense of a~massive increase of the training time.
   
We used 38 facial images of distinct persons from the YaleB dataset~\cite{GeBeKr01} (frontal views only). Each class was represented by a~single image resized to $256\times256$ pixels and normalized to brightness. As the test images, we used synthetically blurred and noisy instances of the database images starting from mild  ($5 \times 5$ blur, SNR = 50 dB) to heavy ($125 \times 125$ blur, SNR = 5 dB) distortions. We used four types of centrosymmetric blur (circular, random, linear motion, Gaussian) and Gaussian white noise in these simulations (see Fig.~\ref{fig:face:blurred:2} for some examples). In each setting, we generated 10 instances of each database image. 

The faces were classified by four different methods -- blur invariants, CNN trained on clear images only, CNN trained on images  augmented with blur, and 
the Gopalan's distance~\cite{gopalan}.
 As blur invariants, we used the particular version of $I(f)$ from Theorem 6 with operator $P$ defined in Section~\ref{Centrosymmetric_blur_in_2D}. As the CNN, we used a~pre-trained ResNet18~\cite{He_2016_CVPR} initially trained on the ImageNet dataset~\cite{imagenet_cvpr09}. Data augmentation was done by adding 100 differently blurred and noisy instances to the training set such that the blur was of the same size as that of the test images. The Gopalan's distance 
belongs to ``handcrafted" features  and measure the "distance" between two images in a~way that should be insensitive to blur.  Unlike the proposed invariants, the Gopalan's method  requires the knowledge of the blur support size, which is no problem in simulated experiments.

The recognition results are summarized in Table~\ref{tab:nn:blurs}.
The performance of the proposed invariants is excellent except for the last two settings, where the blur caused extreme smoothing and significant boundary effect (but still the performance over 90\% is very good). Fig.~\ref{fig:face_g_fail} shows examples of a~very heavy blur that was handled correctly by the proposed invariants. The CNN trained on clear images only fails for mid-size and large blurs, which corresponds to the results of earlier studies. However, if we augment the training data extensively with blurred images, the performance is close to 100\% but the training time was about four hours compared to few seconds required by the invariants. In this scenario, introducing new images/persons to the database requires additional lengthy training of CNNs.
The performance of the Gopalan's method decreases as the blur increases because this method is blur-invariant only approximately. Its computing complexity is less than that of the augmented CNN but much higher than that of the proposed invariants and the CNN without augmentation. 


\begin{table}[!t]
    \centering
    \begin{tabular}{r|c||c|c|c|c}
        \multicolumn{2}{c||}{Degradation}	& \multicolumn{4}{c}{Method} \\
        \hline
        SNR    &  Blur & In    & CNN	    & A-CNN 	& G	\\
        \hline  
        \hline
        50 & circular \hfill 5x5    & 100     &	100	& 100 & 100  	\\
        50 & circular \hfill 10x10    & 100	    &	90	& 100 & 98		\\
        50 & circular \hfill 15x15    & 100     &	35	& 100 & 76	  	\\
        50 & circular \hfill 125x125    & 100     &	-	& 100 & 40	\\         	
        \hline
        5 &  circular \hfill 125x125    & 99,9     &	-	& 99,8  & 5 	\\
        5 &  random \hfill 125x125    & 99,8	    &	-	& 99,9 	& 5	\\
        5 &  motion \hfill 125x125    & 92	    &	-	& 99,6 	& 3	\\
        5 &  Gaussian \hfill 125x125    & 91	    &	-	& 99,5 	& 3	
    \end{tabular} 
    \caption{The recognition rate~[\%] for different degradations
    achieved by the proposed invariants (In), CNN trained on clear images, CNN with augmentation by blurred images, and the Gopalan's method \cite{gopalan} (G).
}
    \label{tab:nn:blurs}
\end{table}


\begin{figure}[htb]
    \centering
    \includegraphics[width = 0.15\linewidth]{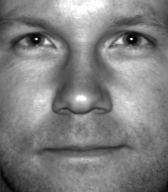}
	\includegraphics[width = 0.15\linewidth]{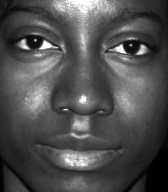}
	\includegraphics[width = 0.15\linewidth]{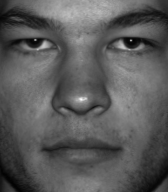}
	\includegraphics[width = 0.15\linewidth]{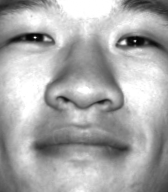}
	\includegraphics[width = 0.15\linewidth]{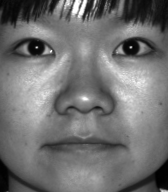}
	\includegraphics[width = 0.15\linewidth]{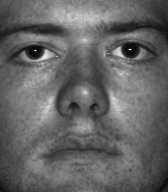}
	
    \includegraphics[width = 0.15\linewidth]{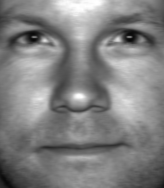}
    \includegraphics[width = 0.15\linewidth]{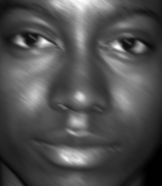}
    \includegraphics[width = 0.15\linewidth]{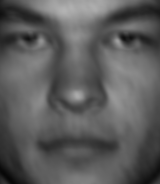}
    \includegraphics[width = 0.15\linewidth]{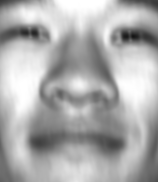}
    \includegraphics[width = 0.15\linewidth]{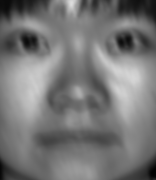}
    \includegraphics[width = 0.15\linewidth]{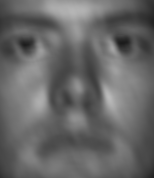}
	
    \includegraphics[width = 0.15\linewidth]{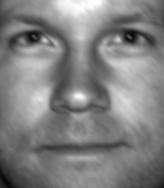}
    \includegraphics[width = 0.15\linewidth]{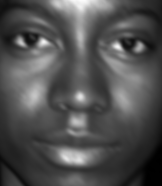}
    \includegraphics[width = 0.15\linewidth]{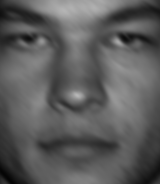}
    \includegraphics[width = 0.15\linewidth]{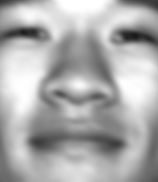}
    \includegraphics[width = 0.15\linewidth]{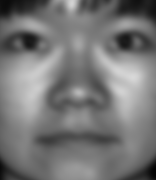}
    \includegraphics[width = 0.15\linewidth]{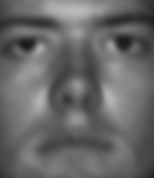}
	
    \includegraphics[width = 0.15\linewidth]{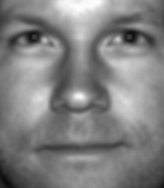}
    \includegraphics[width = 0.15\linewidth]{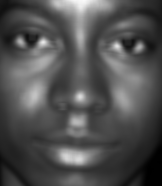}
    \includegraphics[width = 0.15\linewidth]{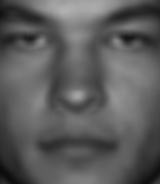}
    \includegraphics[width = 0.15\linewidth]{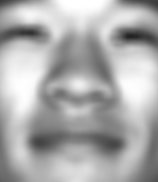}
    \includegraphics[width = 0.15\linewidth]{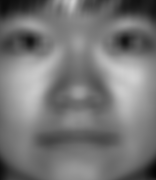}
    \includegraphics[width = 0.15\linewidth]{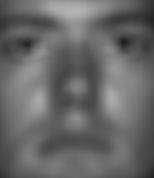}
	
    \includegraphics[width = 0.15\linewidth]{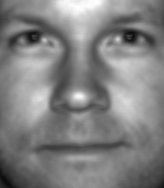}
	\includegraphics[width = 0.15\linewidth]{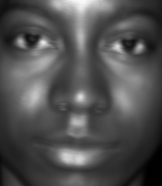}
    \includegraphics[width = 0.15\linewidth]{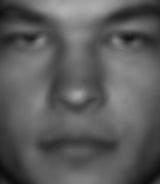}
    \includegraphics[width = 0.15\linewidth]{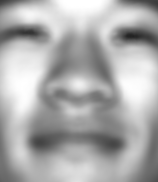}
    \includegraphics[width = 0.15\linewidth]{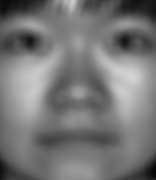}
    \includegraphics[width = 0.15\linewidth]{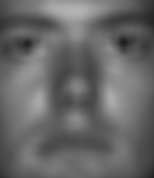}
    \caption{
    Sample faces used in the experiment. From top to bottom: no blur, motion, Gaussian, uniform and random blur; from left to right: 
    blur size 5, 7, 9, 11, 13, and 15 pixels.
    }
\label{fig:face:blurred:2}
\end{figure}

\begin{figure}[!t]
    \centering
    \includegraphics[width=0.28\linewidth]{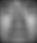}
    \includegraphics[width=0.28\linewidth]{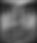}
    \includegraphics[width=0.28\linewidth]{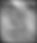}
    \includegraphics[width=0.28\linewidth]{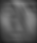}
    \includegraphics[width=0.28\linewidth]{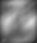}
    \includegraphics[width=0.28\linewidth]{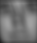}
    \caption{Extreme cases recognized correctly by the blur invariants but misclassified both by CNN and the Gopalan's method. }
    \label{fig:face_g_fail}
\end{figure}

\subsection{Template matching}

Localization of sharp templates in a~blurred scene is a~common task in many application areas such as in landmark-based image registration and in stereo matching. In this experiment, we show how the blur invariants can be used for this purpose.

We took two pictures of the same indoor scene -- the first one was sharp while the other one was intentionally taken with wrong focus. In the sharp image, we selected 21 square templates (see Fig.~\ref{fig:TM}a) and the goal was to find these templates in the blurred scene. Since the out-of-focus blur has approximately a~circular shape, we used the blur invariants w.r.t. radially symmetric blur (see Section~\ref{Radially_symmetric_blur}).
Since the templates are relatively small, we used the invariants defined directly in the image domain by means of moments (\ref{blur_inv_general2}).
The matching was performed by searching over the
whole scene, without using any prior information about the template
position. The matching criterion was the minimum distance
in the space of blur invariants. Nine templates were localized with an error less than or equal to 10 pixels, eight templates with an error 11 -- 20 pixels, three templates with an error 21 -- 30 pixels, and one template with an error greater than 30 pixels (see Fig.~\ref{fig:TM}b). In the sense of a~target error, each template was localized in a~position that is less than half of the template size from the ground truth. 

The localization error is caused by the fact that the blurred template is not exactly a~convolution of the ground truth template and the PSF. We observe a~strong boundary effect as pixels outside the template influence pixels inside the template. This interaction is, of course, beyond the assumed convolution model. In the case of a~large PSF, it influences the matching. If the distance matrix has a~flat minimum, then a~small disturbance of the invariants due to the boundary effect may result in an inaccurate match. 

For comparison, we performed the same task using plain moments instead of the invariants while keeping the number and order of the features the same. Results are unacceptable, most of the templates were matched in totally wrong positions (see
 Fig.~\ref{fig:TM}c). This clearly shows that introducing blur-invariant features brings a~significant improvement.

\begin{figure}[!ht]
    \centering
    \includegraphics[width = 0.94\linewidth]{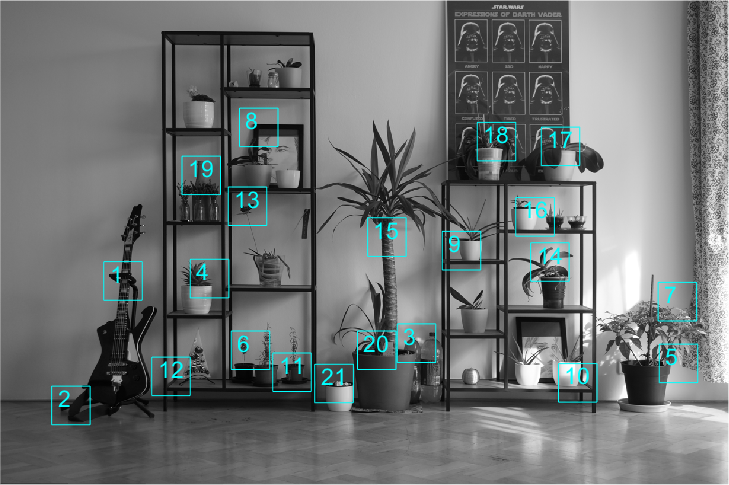}
    (a) 
     \includegraphics[width = .94\linewidth]{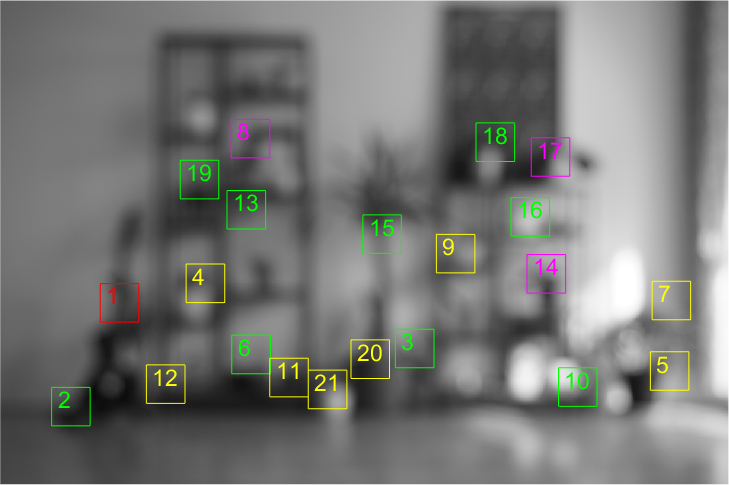}
    (b)\vspace{1ex}
    \includegraphics[width = .94\linewidth]{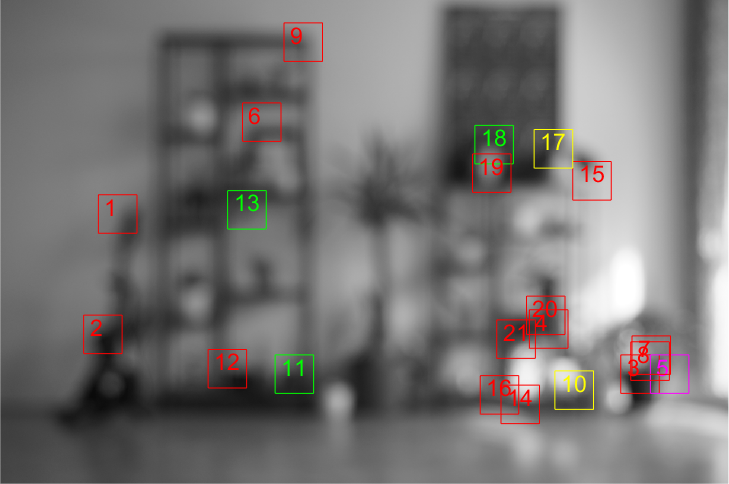}
    (c) \vspace{1ex}
    \caption{Template matching experiment. The original sharp scene with the selected templates (a), the matched templates in the defocused scene using blur invariants (b) and the same using plain moments (c). The template color encodes the localization error (green: 0-10 pixels, yellow: 11-20, violet: 21-30, red: $>30$). }
    \label{fig:TM}
\end{figure}


\subsection{Multichannel deconvolution}

Multichannel blind deconvolution (MBD) is a process where two or more differently blurred images of the same scene are given as an input and a single de-blurred image is obtained as an output \cite{hnedakniha}. The restoration is blind, so no parametric form of the PSF's is required. Comparing to single-channel deconvolution, it is more stable and usually produces much better results. However, the crucial requirement is that the input frames must be registered before entering the deconvolution procedure. The registration accuracy up to several pixels is sufficient because advanced MBD algorithms are able to compensate for a small misalignment 
\cite{SroFlu:05}. 
Since the input frames are blurred, most of the common registration techniques designed originally for sharp images 
\cite{ZitFlu:survey} fail.

For the registration of blurred frames, the proposed invariants can be used. 
In Fig. \ref{fig:MBD} (left and middle), we see two input images of a statue blurred by camera shake. Since the camera was handheld and there was a few-second interval between the acquisitions, the images differ from each other not only by the particular blur but also by a shift and a small rotation. To register them, we use "blur-invariant phase correlation" method. It is an efficient landmark-free technique inspired by traditional phase correlation \cite{castro}. Our method uses directly the blur invariants $I(f)$ and $I(g)$ (instead of whitened Fourier spectrum $F/|F|$ and $G/|G|$ used in the phase correlation) to find the correlation peak. 
Since we do not have much prior information about the blurs, we use operator $P_2$ from Section V.E to design the invariants, because it is less specific than the others and should work for many blurs.
Switching between Cartesian and polar domains, the method can register both shift and rotation.

In this real-data example we do not have any ground truth so we cannot explicitly measure the registration accuracy. However, it is documented by a good performance of the subsequent MBD algorithm. The registered frames were used as an input of the MBD proposed in
\cite{Kotera17}. The result can be seen in Fig. \ref{fig:MBD} right. We acknowledge a sharp image with very little artifacts, which proves a sufficient registration accuracy (and of course a good performance of the MBD algorithm itself). 

\begin{figure*}
    \centering
    \includegraphics[width = 0.32\textwidth]{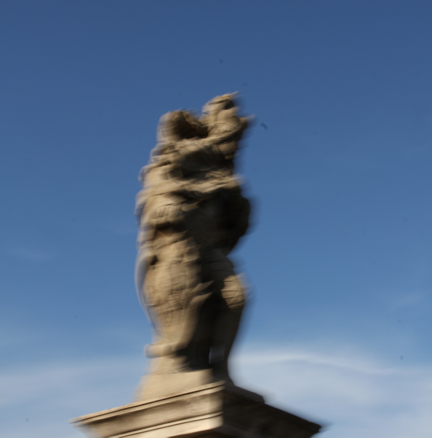}
    \includegraphics[width = 0.32\textwidth]{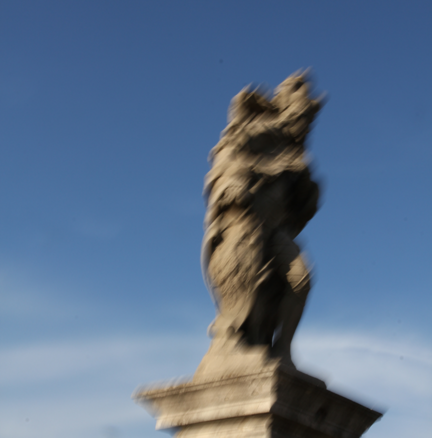}
    \includegraphics[width = 0.285\textwidth]{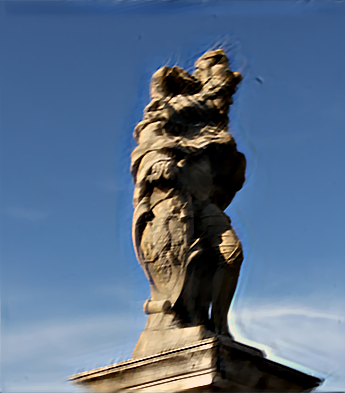}
    \caption{Multichannel deconvolution. The 
    original frames blurred by a camera shake (left and middle). Note the shift and rotation misalignment between them, that was registered by blur-invariant phase correlation.
     The  result of MBD \cite{Kotera17} applied on the registered frames (right).} 
    \label{fig:MBD}
\end{figure*}

\subsection{Multifocus  fusion}

Multifocus image fusion (MIF) is a well-known technique of combining two or more images of the same 3D scene, that were taken by a camera with a shallow depth of field \cite{modrakniha}. Typically, one frame is focused to the foreground while the other one to the background (see Fig. \ref{fig:MFF} for an example). The fusion algorithms basically decide locally in which frame this part of the scene is best focused and generate the fused image by stitching the selected parts together without performing any deconvolution.
Obviously, an accurate registration of the inputs is a key requirement.

The registration problem is here even more challenging than in the previous experiment, because the convolution model holds only on the foreground or background and the required accuracy is higher them in the MBD case. 

The input frames and the fused product are shown in Fig. \ref{fig:MFF}. As in the previous experiment, we applied the blur-invariant phase correlation. Since there was just a shift between the frames, the entire procedure run in the Cartesian coordinates. We assumed a circular out-of-focus blur, so we used the operator $P_\infty$ from Section V.D. After the registration, the fusion itself was performed by the method 
proposed in
\cite{ZHANG202099}. High visual quality  of the fused product with almost no artifacts proves the accuracy of the registration.

\begin{figure*}
    \centering
    \includegraphics[width = 0.32\textwidth]{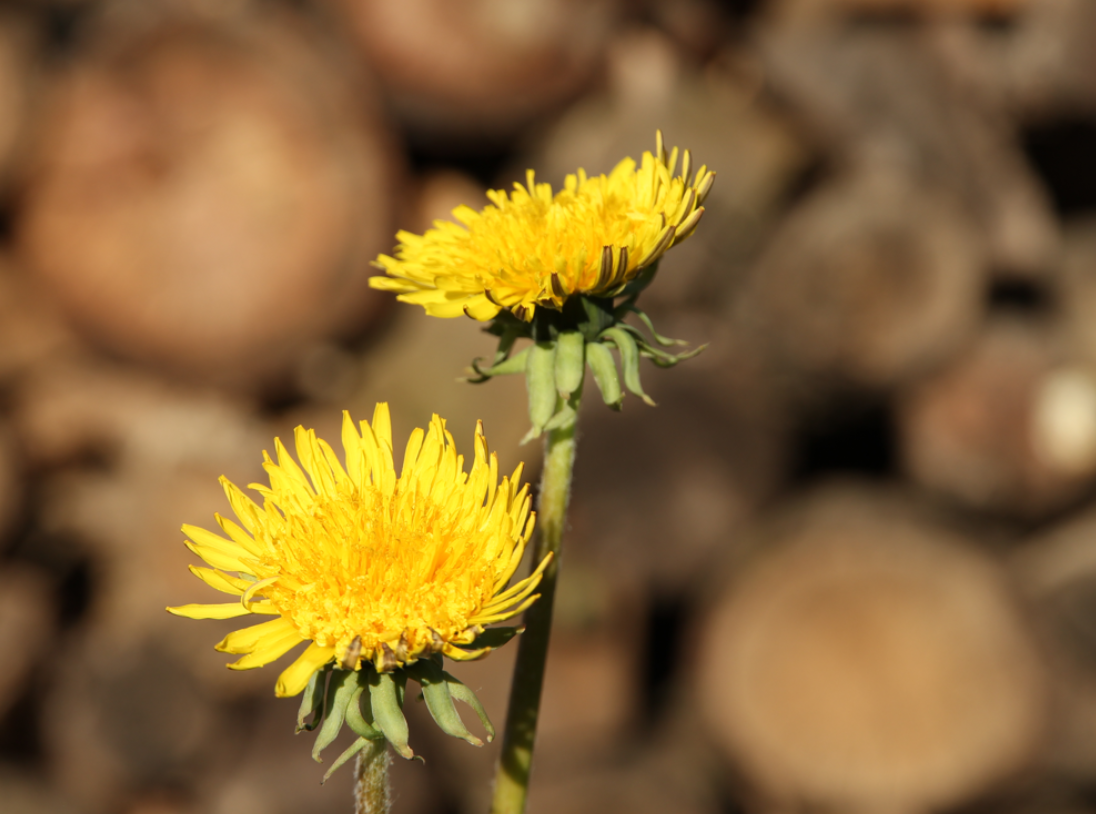}
    \includegraphics[width = 0.32\textwidth]{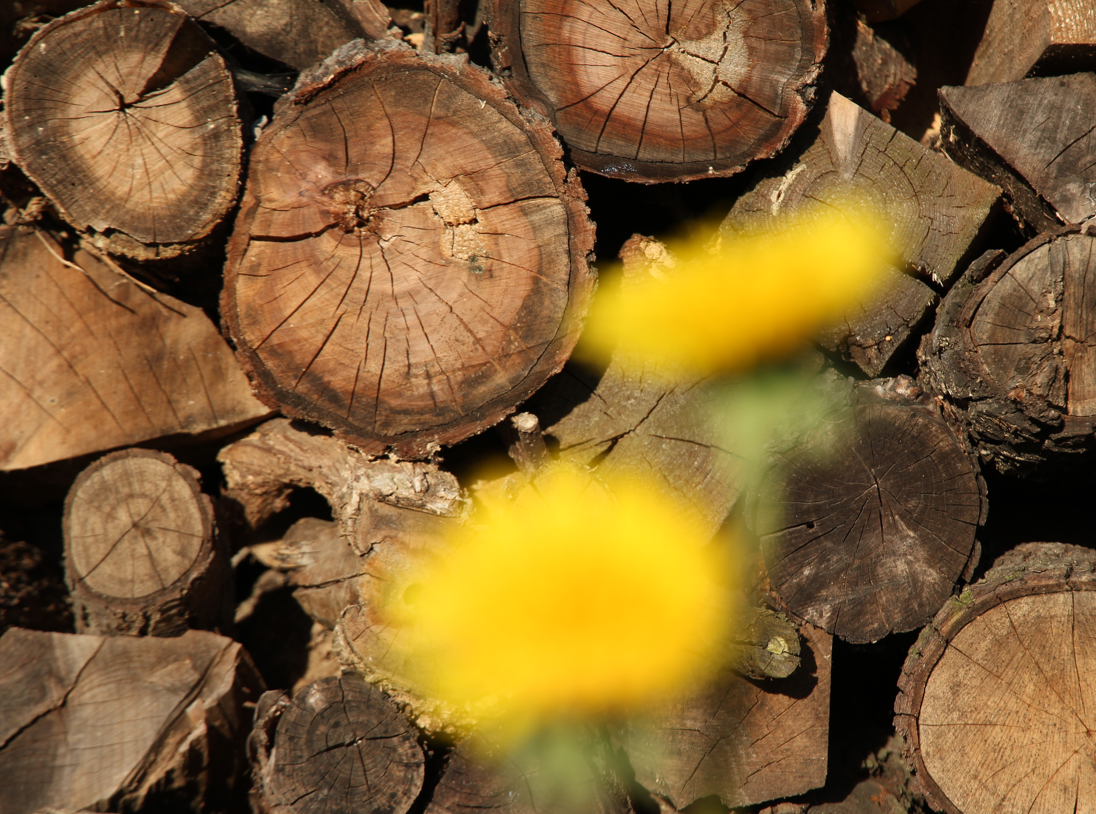}
    \includegraphics[width = 0.28\textwidth]{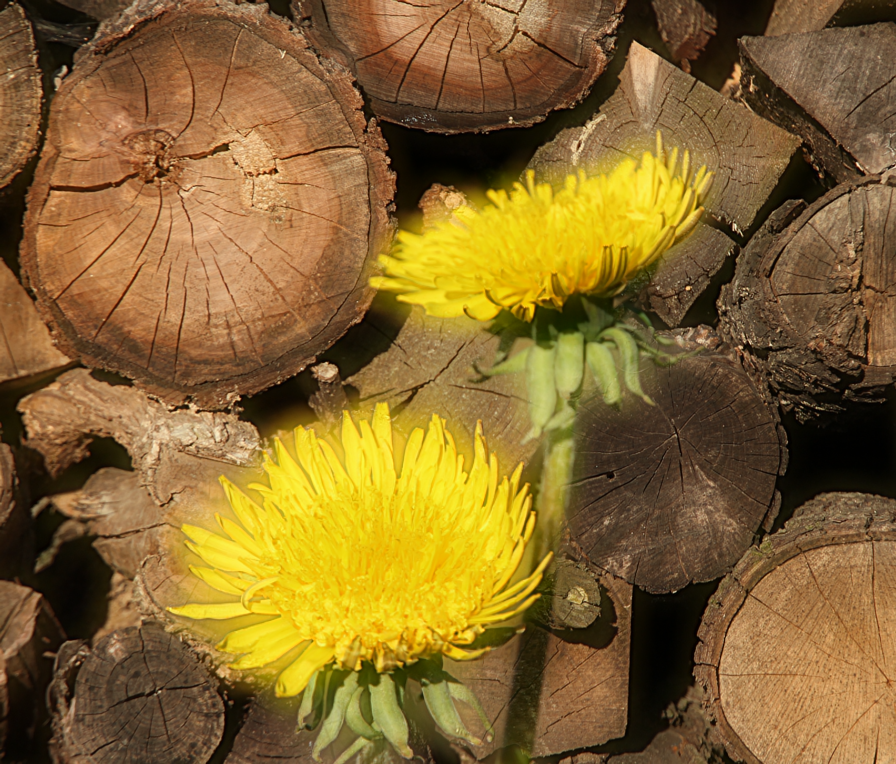}
    \caption{Multifocus  fusion. The input frames focused on the foreground (left) and on the background (middle). The frames were registered by blur-invariant phase correlation and fused by the method from \cite{ZHANG202099} (right).
     } 
    \label{fig:MFF}
\end{figure*}

\subsection{Discussion}

The experiments demonstrate a~very good performance of the proposed invariants in the recognition of blurred objects and in blurred frames registration.
Blur invariants exist in equivalent forms in Fourier domain where they are expressed directly by the projection operator and in the image domain where they use moment expansion. Both domains can be used in experiments and our choice mostly depends on the image size (for large images, Fourier invariants are more efficient and vice versa). 
In terms of recognition power and speed, the proposed invariants are probably the best ``handcrafted'' blur-invariant features ever published.

The comparison to deep-learning methods, represented here by the ResNet CNN, is perhaps even more interesting. We showed that if the scenario is convenient for using ``handcrafted" features, our invariants outperform CNN. By a~convenient scenario, we understand situations, where the number of classes may be high but the classes are relatively small, typically represented by a~single (or very few) training sample(s). To reach a~comparable recognition rate, CNNs require a~massive augmentation over a~wide range of blurs, which makes the training extremely time-consuming.

On the other hand, the proposed invariants can hardly be used for classification into generic classes such as ``person'', ``car'', ``animal'', ``tree'', etc. The invariants do not have the ability to analyze the image content and they are not ``continuous'', which means that two visually similar objects (two dogs or two cars for instance) might have very different invariant values. These scenarios can be well resolved by deep learning, however, there is still the necessity of a~large-scale augmentation of the training set with blur if blurred images are expected on the input of the system.

To summarize, the proposed invariants and CNNs with augmentation are complementary rather than competitive approaches, each of them dominates in distinct situations. 
One of the challenges for future work is to ``fuse'' both approaches for situations that are somewhere in between the above mentioned extremes.

\section{Conclusion}

In this paper, we presented the general theory of invariants with respect to blur. The main original contribution of the paper lies in Theorem~\ref{FTblur_invariants_general}.

The benefit of the paper is twofold. We showed that all previously published examples of blur invariants are just particular cases of a~unified theory, which can be formulated by means of projection operators without a~limitation to a~single blur type. This significantly contributes to the understanding of blur invariants. The application of this theory to the blur types, which have not been fully explored yet, makes it possible to derive new specific blur invariants that would be difficult to construct otherwise.

Several questions, important for the theory and practice of blur invariants, still remain open for future research. A~challenging area is an investigation of linear non-orthogonal projection operators. We have shown that they may generate useful blur invariants in some cases such as directional blur, but we lack a~general theorem similar to GTBI. At the same time, non-orthogonal projectors might provide solutions to many practically important cases where any blur invariants have not be known. Another, even more difficult, open problem is to go beyond linearity and to study blur invariants constructed by means of non-linear projectors. In the case of Gaussian blur, we showed that a~non-linear projector may produce blur invariants in a~natural way. Unlike linear projectors, the non-linear ones have not been consistently investigated, which has been partly due to their variability. 

Another challenge comes from 3D images. Blur invariants in 3D have been explored much less than those in 2D. In 3D, 17 symmetry groups exist~\cite{Weyl} and each of them can create a~blur space. Although the definition of respective projection operators seems to be similar to the 2D case, a~non-trivial problem is to find an appropriate basis $\B$ that separates the moments~\cite{SukFlu:2014}. 

The presented blur invariants, both in Fourier and moment domains, can be made invariant also to rotation, scaling and even to an affine transform. Due to the space limitation, it is not possible to explain these ``combined invariants'' rigorously in this paper.

A way to improving the success rate in recognition of blurred images could be a~fusion of blur invariants with deep learning approaches, which could compensate for weaknesses of both approaches. That could be done either by inserting the invariants into the hidden layers of the network or by decision fusion on the top level. The research on this field is at a~very initial stage and we envisage its dynamic development in the near future.

\section*{Acknowledgement}

This work has been supported by the Czech Science Foundation (GACR) under the project No. GA21-03921S and by the Czech Academy of Sciences under the {\it Praemium Academiae}.

\bibliographystyle{ieeetr}
\bibliography{moments}

\newcommand{\noop}[1]{}
\begin{thebibliography}{10}

\bibitem{pratt}
W.~K. Pratt, {\em Digital Image Processing}.
\newblock New York, USA: Wiley Interscience, 4th~ed., 2007.

\bibitem{gonzales}
R.~C. Gonzalez and R.~E. Woods, {\em Digital Image Processing}.
\newblock Prentice Hall, 3rd~ed., 2007.

\bibitem{kund1}
D.~Kundur and D.~Hatzinakos, ``Blind image deconvolution,'' {\em IEEE Signal
  Processing Magazine}, vol.~13, no.~3, pp.~43--64, 1996.

\bibitem{hnedakniha}
P.~Campisi and K.~Egiazarian, {\em Blind Image Deconvolution: Theory and
  Applications}.
\newblock CRC, 2007.

\bibitem{motiondeblurring}
A.~N. Rajagopalan and R.~Chellappa, {\em Motion Deblurring: Algorithms and
  Systems}.
\newblock Cambridge University Press, 2014.

\bibitem{2D3D}
J.~Flusser, T.~Suk, and B.~Zitov\'{a}, {\em 2D and 3D Image Analysis by
  Moments}.
\newblock Chichester, U.K.: Wiley, 2016.

\bibitem{hilbert}
D.~Hilbert, {\em Theory of Algebraic Invariants}.
\newblock Cambridge, U.K.: Cambridge University Press, 1993.

\bibitem{FluSuk:95a}
J.~Flusser, T.~Suk, and S.~Saic, ``Image features invariant with respect to
  blur,'' {\em Pattern Recognition}, vol.~28, no.~11, pp.~1723--1732, 1995.

\bibitem{FluSuk:96a}
J.~Flusser, T.~Suk, and S.~Saic, ``Recognition of blurred images by the method
  of moments,'' {\em IEEE Transactions on Image Processing}, vol.~5, no.~3,
  pp.~533--538, 1996.

\bibitem{FluSuk:980022}
J.~Flusser and T.~Suk, ``Degraded image analysis: {A}n invariant approach,''
  {\em IEEE Transactions on Pattern Analysis and Machine Intelligence},
  vol.~20, no.~6, pp.~590--603, 1998.

\bibitem{hui:Legendre}
H.~Zhang, H.~Shu, G.-N. Han, G.~Coatrieux, L.~Luo, and J.~L. Coatrieux,
  ``Blurred image recognition by {L}egendre moment invariants,'' {\em IEEE
  Transactions on Image Processing}, vol.~19, no.~3, pp.~596--611, 2010.

\bibitem{wee_Legendre}
C.-Y. Wee and R.~Paramesran, ``Derivation of blur-invariant features using
  orthogonal {L}egendre moments,'' {\em IET Computer Vision}, vol.~1, no.~2,
  pp.~66--77, 2007.

\bibitem{xiubin:Legendre}
X.~Dai, H.~Zhang, H.~Shu, and L.~Luo, ``Image recognition by combined
  invariants of {L}egendre moment,'' in {\em Proceedings of the IEEE
  International Conference on Information and Automation ICIA'10}, (Harbin,
  China), pp.~1793--1798, June 2010.

\bibitem{xiubin}
X.~Dai, H.~Zhang, H.~Shu, L.~Luo, and T.~Liu, ``Blurred image registration by
  combined invariant of {L}egendre moment and {H}arris-{L}aplace detector,'' in
  {\em Proceedings of the Fourth Pacific-Rim Symposium on Image and Video
  Technology PSIVT'10}, pp.~300--305, IEEE, 2010.

\bibitem{xiubin:pseudo_Zernike}
X.~Dai, T.~Liu, H.~Shu, and L.~Luo, ``Pseudo-{Z}ernike moment invariants to
  blur degradation and their use in image recognition,'' in {\em Intelligent
  Science and Intelligent Data Engineering IScIDE'12} (J.~Yang, F.~Fang, and
  C.~Sun, eds.), vol.~7751 of {\em Lecture Notes in Computer Science},
  pp.~90--97, Springer, 2013.

\bibitem{cheb-cabmi}
Q.~Liu, H.~Zhu, and Q.~Li, ``Image recognition by combined affine and blur
  {T}chebichef moment invariants,'' in {\em Proceedings of 4th International
  Conference on Image and Signal Processing (CISP)}, pp.~1517--1521, 2011.

\bibitem{xin-SIFT}
X.~Zuo, X.~Dai, and L.~Luo, ``{M-SIFT}: A new descriptor based on {L}egendre
  moments and {SIFT},'' in {\em Proceedings of the 3rd International Conference
  on Machine Vision ICMV'10}, pp.~183--186, 2010.

\bibitem{KauFlu:11}
J.~Kautsky and J.~Flusser, ``Blur invariants constructed from arbitrary
  moments,'' {\em IEEE Transactions on Image Processing}, vol.~20, no.~12,
  pp.~3606--3611, 2011.

\bibitem{zhang}
Y.~Zhang, C.~Wen, and Y.~Zhang, ``Estimation of motion parameters from blurred
  images,'' {\em Pattern Recognition Letters}, vol.~21, no.~5, pp.~425--433,
  2000.

\bibitem{ZhangPR02}
Y.~Zhang, C.~Wen, Y.~Zhang, and Y.~C. Soh, ``Determination of blur and affine
  combined invariants by normalization,'' {\em Pattern Recognition}, vol.~35,
  no.~1, pp.~211--221, 2002.

\bibitem{FluBolZit}
J.~Flusser, J.~Boldy\v{s}, and B.~Zitov\'{a}, ``Invariants to convolution in
  arbitrary dimensions,'' {\em Journal of Mathematical Imaging and Vision},
  vol.~13, no.~2, pp.~101--113, 2000.

\bibitem{FluBol:pami03}
J.~Flusser, J.~Boldy\v{s}, and B.~Zitov\'{a}, ``Moment forms invariant to
  rotation and blur in arbitrary number of dimensions,'' {\em IEEE Transactions
  on Pattern Analysis and Machine Intelligence}, vol.~25, no.~2, pp.~234--246,
  2003.

\bibitem{candocia}
F.~M. Candocia, ``Moment relations and blur invariant conditions for
  finite-extent signals in one, two and $n$-dimensions,'' {\em Pattern
  Recognition Letters}, vol.~25, pp.~437--447, 2004.

\bibitem{Ojansivu1}
V.~Ojansivu and J.~Heikkil\"{a}, ``A method for blur and affine invariant
  object recognition using phase-only bispectrum,'' in {\em The International
  Conference on Image Analysis and Recognition ICIAR'08}, vol.~LNCS 5112,
  pp.~527--536, Springer, 2008.

\bibitem{ojansivu:spl}
V.~Ojansivu and J.~Heikkil\"{a}, ``Image registration using blur-invariant
  phase correlation,'' {\em IEEE Signal Processing Letters}, vol.~14, no.~7,
  pp.~449--452, 2007.

\bibitem{makaremi:wvl}
I.~Makaremi and M.~Ahmadi, ``Blur invariants: A novel representation in the
  wavelet domain,'' {\em Pattern Recognition}, no.~43, pp.~3950--3957, 2010.

\bibitem{makaremi:tip}
I.~Makaremi and M.~Ahmadi, ``Wavelet domain blur invariants for image
  analysis,'' {\em IEEE Transactions on Image Processing}, vol.~21, no.~3,
  pp.~996--1006, 2012.

\bibitem{gali:radon}
R.~R. Galigekere and M.~N.~S. Swamy, ``Moment patterns in the {R}adon space:
  invariance to blur,'' {\em Optical Engineering}, vol.~45, no.~7,
  pp.~(077003--)1--6, 2006.

\bibitem{Ojansivu_Heikkila_LPQ1}
V.~Ojansivu and J.~Heikkil\"{a}, ``Blur insensitive texture classification
  using local phase quantization,'' in {\em Image and Signal Processing
  ICISP'08} (A.~Elmoataz, O.~Lezoray, F.~Nouboud, and D.~Mammass, eds.),
  vol.~5099 of {\em Lecture Notes in Computer Science}, (Berlin, Heidelberg,
  Germany), pp.~236--243, Springer, 2008.

\bibitem{bentoutours}
Y.~Bentoutou, N.~Taleb, K.~Kpalma, and J.~Ronsin, ``An automatic image
  registration for applications in remote sensing,'' {\em IEEE Transactions on
  Geoscience and Remote Sensing}, vol.~43, no.~9, pp.~2127--2137, 2005.

\bibitem{zhaoxia}
Z.~Liu, J.~An, and L.~Li, ``A two-stage registration angorithm for oil spill
  aerial image by invariants-based similarity and improved {ICP},'' {\em
  International Journal of Remote Sensing}, vol.~32, no.~13, pp.~3649--3664,
  2011.

\bibitem{huetal}
S.~X. Hu, Y.-M. Xiong, M.~Z.~W. Liao, and W.~F. Chen, ``Accurate point matching
  based on combined moment invariants and their new statistical metric,'' in
  {\em Proceedings of the International Conference on Wavelet Analysis and
  Pattern Recognition ICWAPR'07}, pp.~376--381, IEEE Computer Society, 2007.

\bibitem{bentoutou}
Y.~Bentoutou, N.~Taleb, M.~{Chikr El Mezouar}, M.~Taleb, and J.~Jetto, ``An
  invariant approach for image registration in digital subtraction
  angiography,'' {\em Pattern Recognition}, vol.~35, no.~12, pp.~2853--2865,
  2002.

\bibitem{bentoutou3D}
Y.~Bentoutou and N.~Taleb, ``Automatic extraction of control points for digital
  subtraction angiography image enhancement,'' {\em IEEE Transactions on
  Nuclear Science}, vol.~52, no.~1, pp.~238--246, 2005.

\bibitem{bentoutou-CVIU}
Y.~Bentoutou and N.~Taleb, ``A 3-{D} space-time motion detection for an
  invariant image registration approach in digital subtraction angiography,''
  {\em Computer Vision and Image Understanding}, vol.~97, pp.~30--50, 2005.

\bibitem{bob}
B.~Mahdian and S.~Saic, ``Detection of {copy-move} forgery using a method based
  on blur moment invariants,'' {\em Forensic Science International}, vol.~171,
  no.~2--3, pp.~180--189, 2007.

\bibitem{Ahonen_Rahtu_Ojansivu_Heikkila_LPQ4}
T.~Ahonen, E.~Rahtu, V.~Ojansivu, and J.~Heikkil\"{a}, ``Recognition of blurred
  faces using local phase quantization,'' in {\em 19th International Conference
  on Pattern Recognition, ICPR'08}, pp.~1--4, IEEE, Dec 2008.

\bibitem{zhangfuze}
Y.~Zhang, Y.~Zhang, and C.~Wen, ``A new focus measure method using moments,''
  {\em Image and Vision Computing}, vol.~18, no.~12, pp.~959--965, 2000.

\bibitem{yap}
P.-T. Yap and P.~Raveendran, ``Image focus measure based on {C}hebyshev
  moments,'' {\em IEE Proceedings of the Vision, Image and Signal Processing},
  vol.~151, no.~2, pp.~128--136, 2004.

\bibitem{FluZit:icpr04}
J.~Flusser and B.~Zitov\'{a}, ``Invariants to convolution with circularly
  symmetric {PSF},'' in {\em Proceedings of the 17th International Conference
  on Pattern Recognition ICPR'04}, pp.~11--14, IEEE Computer Society, 2004.

\bibitem{zhu:ZernikePAA}
H.~Zhu, M.~Liu, H.~Ji, and Y.~Li, ``Combined invariants to blur and rotation
  using {Z}ernike moment descriptors,'' {\em Pattern Analysis and
  Applications}, vol.~3, no.~13, pp.~309--319, 2010.

\bibitem{beijing:Zernike}
B.~Chen, H.~Shu, H.~Zhang, G.~Coatrieux, L.~Luo, and J.~L. Coatrieux,
  ``Combined invariants to similarity transformation and to blur using
  orthogonal {Z}ernike moments,'' {\em IEEE Transactions on Image Processing},
  vol.~20, no.~2, pp.~345--360, 2011.

\bibitem{hanjie:Zernike}
H.~Ji and H.~Zhu, ``Degraded image analysis using {Z}ernike moment
  invariants,'' in {\em Proceedings of the International Conference on
  Acoustics, Speech and Signal Processing ICASSP'09}, pp.~1941--1944, 2009.

\bibitem{FMM}
Q.~Liu, H.~Zhu, and Q.~Li, ``Object recognition by combined invariants of
  orthogonal {F}ourier-{M}ellin moments,'' in {\em Proceedings of 8th
  International Conference on Information, Communications and Signal Processing
  ICICS'11}, pp.~1--5, IEEE, Dec 2011.

\bibitem{FluSuk:96}
J.~Flusser, T.~Suk, and S.~Saic, ``Recognition of images degraded by linear
  motion blur without restoration,'' {\em Computing Supplement}, vol.~11,
  pp.~37--51, 1996.

\bibitem{stern}
A.~Stern, I.~Kruchakov, E.~Yoavi, and S.~Kopeika, ``Recognition of
  motion-blured images by use of the method of moments,'' {\em Applied Optics},
  vol.~41, pp.~2164--2172, 2002.

\bibitem{weed}
Z.~Peng and C.~Jun, ``Weed recognition using image blur information,'' {\em
  Biosystems Engineering}, vol.~110, no.~2, pp.~198--205, 2011.

\bibitem{woodslice}
C.~Guang-Sheng and Z.~Peng, ``Dynamic wood slice recognition using image blur
  information,'' {\em Sensors and Actuators A: Physical}, vol.~176, no.~April,
  pp.~27--33, 2012.

\bibitem{water:TIP}
S.~Zhong, Y.~Liu, Y.~Liu, and C.~Li, ``Water reflection recognition based on
  motion blur invariant moments in curvelet space full text sign-in or
  purchase,'' {\em IEEE Transactions on Image Processing}, vol.~22, no.~11,
  pp.~4301--4313, 2013.

\bibitem{guan-biologically}
B.~Guan, S.~Wang, and G.~Wang, ``A biologically inspired method for estimating
  2{D} high-speed translational motion,'' {\em Pattern Recognition Letters},
  vol.~26, pp.~2450--2462, 2005.

\bibitem{wang-sinusoidal}
S.~Wang, B.~Guan, G.~Wang, and Q.~Li, ``Measurement of sinusoidal vibration
  from motion blurred images,'' {\em Pattern Recognition Letters}, vol.~28,
  pp.~1029--1040, 2007.

\bibitem{tianxu:gauss}
J.~Liu and T.~Zhang, ``Recognition of the blurred image by complex moment
  invariants,'' {\em Pattern Recognition Letters}, vol.~26, no.~8,
  pp.~1128--1138, 2005.

\bibitem{xiao-gauss}
B.~Xiao, J.-F. Ma, and J.-T. Cui, ``Combined blur, translation, scale and
  rotation invariant image recognition by {R}adon and pseudo-{F}ourier-{M}ellin
  transforms,'' {\em Pattern Recognition}, vol.~45, pp.~314--321, 2012.

\bibitem{pami:2015}
J.~Flusser, T.~Suk, J.~Boldy\v{s}, and B.~Zitov\'{a}, ``Projection operators
  and moment invariants to image blurring,'' {\em IEEE Transactions on Pattern
  Analysis and Machine Intelligence}, vol.~37, no.~4, pp.~786--802, 2015.

\bibitem{matteo:tip}
M.~Pedone, J.~Flusser, and J.~Heikkil\"{a}, ``Blur invariant translational
  image registration for {$N$}-fold symmetric blurs,'' {\em IEEE Transactions
  on Image Processing}, vol.~22, no.~9, pp.~3676--3689, 2013.

\bibitem{MMIPR}
J.~Flusser, T.~Suk, and B.~Zitov\'{a}, {\em Moments and Moment Invariants in
  Pattern Recognition}.
\newblock Chichester, U.K.: Wiley, 2009.

\bibitem{FluSuk:97a}
J.~Flusser and T.~Suk, ``{C}lassification of degraded signals by the method of
  invariants,'' {\em Signal Processing}, vol.~60, no.~2, pp.~243--249, 1997.

\bibitem{boldys:dihedral}
J.~Boldy\v{s} and J.~Flusser, ``Invariants to symmetrical convolution with
  application to dihedral kernel symmetry,'' in {\em Proceedings of the 17th
  International Conference on Image Analysis and Processing ICIAP'13}
  (A.~Petrosino, ed.), vol.~8157 of {\em Lecture Notes in Computer Science},
  pp.~369--378, part II, Springer, 2013.

\bibitem{matteo:tip2015}
M.~Pedone, J.~Flusser, and J.~Heikkil\"{a}, ``Registration of images with
  {$N$}-fold dihedral blur,'' {\em IEEE Transactions on Image Processing},
  vol.~24, no.~3, pp.~1036--1045, 2015.

\bibitem{woodslice-reaction}
J.~Flusser, T.~Suk, and B.~Zitov\'{a}, ``On the recognition of wood slices by
  means of blur invariants,'' {\em Sensors and Actuators A: Physical},
  vol.~198, pp.~113--118, 2013.

\bibitem{weed-reaction}
J.~Flusser, T.~Suk, and B.~Zitov\'{a}, ``Comments on `{W}eed recognition using
  image blur information' by {P}eng, {Z}. \& {J}un, {C}., {B}iosystems
  {E}ngineering 110 (2), p. 198--205,'' {\em Biosystems Engineering}, vol.~126,
  pp.~104--108, 2014.

\bibitem{gauss-metric-conf}
Z.~Zhang, E.~Klassen, A.~Srivastava, P.~Turaga, and R.~Chellappa,
  ``Blurring-invariant {R}iemannian metrics for comparing signals and images,''
  in {\em IEEE International Conference on Computer Vision, ICCV'11},
  pp.~1770--1775, 2011.

\bibitem{gauss-metric}
Z.~Zhang, E.~Klassen, and A.~Srivastava, ``{G}aussian blurring-invariant
  comparison of signals and images,'' {\em IEEE Transactions on Image
  Processing}, vol.~22, no.~8, pp.~3145--3157, 2013.

\bibitem{Gauss-TIP2015}
J.~Flusser, S.~Farokhi, C.~{H\"{o}schl IV}, T.~Suk, B.~Zitov\'{a}, and
  M.~Pedone, ``Recognition of images degraded by {G}aussian blur,'' {\em IEEE
  Transactions on Image Processing}, vol.~25, no.~2, pp.~790--806, 2016.

\bibitem{PR2019:gauss}
J.~Kostkov{\'a}, J.~Flusser, M.~L{\'e}bl, and M.~Pedone, ``Handling {G}aussian
  blur without deconvolution,'' {\em Pattern Recognition}, vol.~103, 2020.
\newblock art. no. 107264.

\bibitem{vasiljevic2016examining}
I.~Vasiljevic, A.~Chakrabarti, and G.~Shakhnarovich, ``Examining the impact of
  blur on recognition by convolutional networks,'' {\em arXiv preprint
  arXiv:1611.05760}, 2016.

\bibitem{zhou2017classification}
Y.~Zhou, S.~Song, and N.-M. Cheung, ``On classification of distorted images
  with deep convolutional neural networks,'' in {\em 2017 IEEE International
  Conference on Acoustics, Speech and Signal Processing (ICASSP)},
  pp.~1213--1217, IEEE, 2017.

\bibitem{dodge2016understanding}
S.~Dodge and L.~Karam, ``Understanding how image quality affects deep neural
  networks,'' in {\em 2016 eighth international conference on quality of
  multimedia experience (QoMEX)}, pp.~1--6, IEEE, 2016.

\bibitem{pei_CNNdegradationPAMI}
Y.~{Pei}, Y.~{Huang}, Q.~{Zou}, X.~{Zhang}, and S.~{Wang}, ``Effects of image
  degradation and degradation removal to {CNN}-based image classification,''
  {\em IEEE Transactions on Pattern Analysis and Machine Intelligence},
  vol.~43, no.~4, pp.~1239--1253, 2021.

\bibitem{GeBeKr01}
A.~Georghiades, P.~Belhumeur, and D.~Kriegman, ``From few to many: Illumination
  cone models for face recognition under variable lighting and pose,'' {\em
  IEEE Trans. Pattern Anal. Mach. Intelligence}, vol.~23, no.~6, pp.~643--660,
  2001.

\bibitem{gopalan}
R.~Gopalan, P.~Turaga, and R.~Chellappa, ``A blur-robust descriptor with
  applications to face recognition,'' {\em IEEE Transactions on Pattern
  Analysis and Machine Intelligence}, vol.~34, no.~6, pp.~1220--1226, 2012.

\bibitem{He_2016_CVPR}
K.~He, X.~Zhang, S.~Ren, and J.~Sun, ``Deep residual learning for image
  recognition,'' in {\em The IEEE Conference on Computer Vision and Pattern
  Recognition (CVPR)}, June 2016.

\bibitem{imagenet_cvpr09}
J.~Deng, W.~Dong, R.~Socher, L.-J. Li, K.~Li, and L.~Fei-Fei, ``{ImageNet: A
  Large-Scale Hierarchical Image Database},'' in {\em CVPR09}, 2009.

\bibitem{SroFlu:05}
F.~\v{S}roubek and J.~Flusser, ``Multichannel blind deconvolution of spatially
  misaligned images,'' {\em IEEE Transactions on Image Processing}, vol.~14,
  no.~7, pp.~874--883, 2005.

\bibitem{ZitFlu:survey}
B.~Zitov\'{a} and J.~Flusser, ``Image registration methods: A survey,'' {\em
  Image and Vision Computing}, vol.~21, no.~11, pp.~977--1000, 2003.

\bibitem{castro}
E.~de~Castro and C.~Morandi, ``Registration of translated and rotated images
  using finite {F}ourier transform,'' {\em IEEE Transactions on Pattern
  Analysis and Machine Intelligence}, vol.~9, no.~5, pp.~700--703, 1987.

\bibitem{Kotera17}
J.~Kotera, V.~{\v{S}}m{\'{\i}}dl, and F.~{\v{S}}roubek, ``Blind deconvolution
  with model discrepancies,'' {\em IEEE Transactions on Image Processing},
  vol.~26, no.~5, pp.~2533--2544, 2017.

\bibitem{modrakniha}
R.~S. Blum and Z.~Liu, {\em Multi-Sensor Image Fusion and Its Applications}.
\newblock CRC, 2006.

\bibitem{ZHANG202099}
Y.~Zhang, Y.~Liu, P.~Sun, H.~Yan, X.~Zhao, and L.~Zhang, ``{IFCNN}: A~general
  image fusion framework based on convolutional neural network,'' {\em
  Information Fusion}, vol.~54, pp.~99--118, 2020.

\bibitem{Weyl}
H.~Weyl, {\em Symmetry}.
\newblock Princeton, USA: Princeton University Press, 1952.

\bibitem{SukFlu:2014}
T.~Suk and J.~Flusser, ``Recognition of symmetric 3{D} bodies,'' {\em
  Symmetry}, vol.~6, no.~3, pp.~722--757, 2014.

\end{thebibliography}

\end{document}